\newcommand{\longversion}[1]{#1}
\newcommand{\shortversion}[1]{}
\newcommand{\citex}[1]{\citet{#1}}
\newcommand{\shortcite}[1]{\citep{#1}}
\pgfplotsset{width=5.6cm,compat=newest}
\definecolor{lightblue}{HTML}{55AAFF}
\definecolor{lightpurple}{HTML}{DD77BB}
\definecolor{lightgray}{HTML}{DDDDDD}
\definecolor{strongyellow}{HTML}{FFE219}
\def\restrict#1{\raise-.5ex\hbox{\ensuremath|}_{#1}}
\title{Rushing and Strolling among Answer Sets -- Navigation Made Easy}
\author{
    Johannes Klaus Fichte\textsuperscript{\rm 1}, 
    Sarah Alice Gaggl\textsuperscript{\rm 2}, 
    Dominik Rusovac\textsuperscript{\rm 2}
}
\newtheorem{definition}{Definition}
\newtheorem{theorem}{Theorem}
\newtheorem{lemma}{Lemma}
\newtheorem{example}{Example}
\newtheorem{corollary}{Corollary}
\newcommand{\lp}{\Pi}                                         %
\newcommand{\A}[1][\lp]{\mathcal{A}(#1)}                      %
\newcommand{\AS}[1][\lp]{\mathcal{AS}(#1)}                    %
\newcommand{\BC}[1][\lp]{\mathcal{BC}(#1)}                    %
\newcommand{\CC}[1][\lp]{\mathcal{CC}(#1)}                    %
\newcommand{\F}[1][\lp]{\mathcal{F}(#1)}                      %
\newcommand{\FI}[1][\lp]{\mathcal{F}^+(#1)}                   %
\newcommand{\FE}[1][\lp]{\mathcal{F}^-(#1)}                   %
\newcommand{\SA}[2][\lp]{S^{\models_{#1} #2}}                 %
\newcommand{\DF}[1][\lp]{\mathcal{DF}(#1)}                    %
\newcommand{\route}[1][]{\langle #1 \rangle}                  %
\newcommand{\De}[2][\lp]{\Delta^{#1}_{#2}}                    %
\newcommand{\RE}[2][f]{\mathcal{R}(#2, #1)}                   %
\newcommand{\vgo}[1][]{\nu_{\mathit{go}}#1}                   %
\newcommand{\pace}[2]{\mathcal{P}_{#1}(#2)}                   %
\newcommand{\abs}{\#\mathcal{AS}}
\newcommand{\absw}{\omega_{\abs}}
\newcommand{\fc}{\#\mathcal{F}}
\newcommand{\fcw}{\omega_{\fc}}
\newcommand{\smc}{\#\mathcal{S}}
\newcommand{\smcw}{\omega_{\smc}}
\newcommand{\simnot}{\mathord{\sim}}
\newcommand{\vsgo}[1]{\nu^{#1}_{\mathit{sgo}}}
\newcommand{\vexpl}[1]{\nu^{#1}_{\mathit{expl}}}
\newcommand{\complexityClassFont}[1]{\ensuremath{\mathrm{#1}}}
\newcommand{\NP}{\text{\complexityClassFont{NP}}\xspace}
\newcommand{\co}{\complexityClassFont{co}}
\newcommand{\Ptime}{\complexityClassFont{P}\xspace}
\newcommand{\PH}{\complexityClassFont{PH}\xspace}
\newcommand{\numberP}{\complexityClassFont{\#\Ptime}\xspace}
  \newcommand{\citex}[1]{\citeauthor{#1}~\shortcite{#1}}
\newcommand{\predname}[1]{\mathit{#1}\xspace}
\begin{document}

\longversion{ \title{Rushing and Strolling among Answer Sets --
    \\Navigation Made Easy\thanks{This is the authors' self-archived
      copy, including proofs, of a paper that has been accepted for
      publication at AAAI-22.}}
\author[1]{Johannes Klaus Fichte}
\author[2]{Sarah Alice Gaggl}
\author[2]{Dominik Rusovac}
\affil[1]{\small TU Wien, Austria, \texttt{johannes.fichte@tuwien.ac.at}}
\affil[2]{\small TU Dresden, Germany, \texttt{firstname.lastname@tu-dresden.de}}
\date{}
}

\maketitle

\shortversion{
\begin{abstract}
  Answer set programming (ASP) is a popular declarative programming paradigm
  with a wide range of applications in artificial intelligence.
  Oftentimes, when modeling an AI problem with ASP, and in particular when we are
  interested beyond simple search for optimal solutions, an actual solution,
  differences between solutions, or number of solutions of the ASP program
  matter. 
  For example, when a user aims to identify a specific answer set according to
  her needs, or requires the total number of diverging solutions to comprehend
  probabilistic applications such as reasoning in medical domains.  
  Then, there are only certain problem specific and handcrafted
  encoding techniques available to navigate the solution space of ASP programs,
  which is oftentimes not enough. In this paper, we propose a formal and general
  framework for interactive navigation towards desired subsets of answer sets
  analogous to faceted browsing. Our approach enables the user to explore the
  solution space by consciously zooming in or out of sub-spaces of solutions at
  a certain configurable pace. We illustrate that weighted faceted navigation is
  computationally hard. Finally, we provide an implementation of our approach
  that demonstrates the feasibility of our framework for incomprehensible
  solution spaces.
\end{abstract}
}

\longversion{
\begin{abstract}
  \noindent Answer set programming (ASP) is a popular declarative programming paradigm
  with a wide range of applications in artificial intelligence.
  Oftentimes, when modeling an AI problem with ASP, and in particular when we are
  interested beyond simple search for optimal solutions, an actual solution,
  differences between solutions, or number of solutions of the ASP program
  matter. 
  For example, when a user aims to identify a specific answer set according to
  her needs, or requires the total number of diverging solutions to comprehend
  probabilistic applications such as reasoning in medical domains.  
  Then, there are only certain problem specific and handcrafted
  encoding techniques available to navigate the solution space of ASP programs,
  which is oftentimes not enough. In this paper, we propose a formal and general
  framework for interactive navigation towards desired subsets of answer sets
  analogous to faceted browsing. Our approach enables the user to explore the
  solution space by consciously zooming in or out of sub-spaces of solutions at
  a certain configurable pace. We illustrate that weighted faceted navigation is
  computationally hard. Finally, we provide an implementation of our approach
  that demonstrates the feasibility of our framework for incomprehensible
  solution spaces.
\end{abstract}
}

\section{Introduction}
Answer set programming (ASP) is a declarative programming paradigm, which has
its roots in logic programming and nonmonotonic reasoning. It is widely used for
knowledge representation and problem
solving~\longversion{\cite{brewka2011answer,eiter2009answer,gebser2012answer}}\shortversion{\cite{brewka2011answer}}.
In ASP, a problem is encoded as a set of rules (logic program) and is evaluated
under stable model semantics~\cite{GelfondLifschitz88,GelfondLifschitz91},
using solvers such as
\verb"clingo"~\cite{gebser2011advances,DBLP:journals/corr/GebserKKS14},
\verb"WASP"~\cite{AlvianoDodaroLeone15}, or \verb"DLV"~\cite{AlvianoEtAl17}. Then,
answer sets represent solutions to the modeled problem.

Oftentimes when modeling with ASP, the number of solutions of the
resulting program can be quite high.
This is not necessarily a problem when searching for a few solutions,~e.g.,
optimal solutions~\cite{GebserKaminskiSchaub11,AlvianoDodaro16a} or when
incorporating
preferences~\longversion{\cite{Brewka04,BrewkaEtAl15,BrewkaEtAl15b,AlvianoRomeroSchaub18}}\shortversion{\cite{Brewka04,BrewkaEtAl15,AlvianoRomeroSchaub18}}.
However, there are many situations where reasoning goes beyond simple search for
one answer set, for example, planning when certain routes are gradually
forbidden~\cite{SonEtAl16}, finding diverging
solutions~\cite{everardo2017towards,EverardoEtAl19}, reasoning in probabilistic
applications~\cite{LeeTalsaniaWang17}, or debugging answer
sets~\longversion{\cite{OetschPT18, DodaroGRRS19, VosKOPT12, Shchekotykhin15,
GebserEtAl08}}\shortversion{\cite{OetschPT18,GebserEtAl08}}.

Now, if the user is interested in more than a few solutions to
gradually identify specific answer sets, tremendous solution spaces
can easily become infeasible to comprehend.
In fact, it might not even be possible to compute all solutions in
reasonable time.
Examples where we easily see large solution spaces are configuration
problems~\cite{soininen1999developing,soininen2001representing,tiihonen2003practical}, such as for instance
PC configuration, and
planning
problems~\cite{dimopoulos1997encoding,lifschitz1999action,nogueira2001prolog}.
Let us consider a simple example to illustrate the use of navigation in ASP.
\newcommand{\pnot}{\simnot}
\begin{example}
  Consider an online shopping situation where we have a knowledge base
  on clothes and some rules which specify which combinations would suit well or not.
  \begin{align*}
\big\{ \{\predname{outfit}(X,Y): &\predname{clothes}(X,Y) \}; \\
 &\leftarrow  \predname{outfit}(X,Y1), \\ 
                                     & \quad\;\;\predname{outfit}(X,Y2), Y1 \neq Y2; & \\
\predname{occasion}(\text{vancouver}) &\leftarrow \predname{outfit}(\text{jacket},\ldots); &\\
\predname{occasion}(\text{conference}) &\leftarrow \predname{outfit}(\text{suit},Y), Y \neq \text{\text{yellow}}; &\\
   \predname{occasion}(\text{wistler}) &\leftarrow  \predname{outfit}(\text{boots},\ldots)%
   \ldots   \big\}
  \end{align*}
  Together with  input facts from a clothes database like
  $\predname{clothes}(\text{jacket,blue})$;
  $\predname{clothes}(\text{shirt,red}); \dots $ one easily obtains more than a
  million answer sets.  Since Canada opened immigration for vaccinated persons,
  we actually might be able to travel to Vancouver. Say we  
  zoom in on outfits including shorts, which leads to a rather small, but still
  incomprehensible sub-space of solutions. Imagine that most of the remaining
  outfits include chucks and a jacket. Say we want to inspect the most different
  outfits still remaining, then we aim to choose potential parts of our outfit
  that provide us with most diverse solutions. Now, we are almost good to go,
  seeking to find some final additions to our outfit quickly.

\end{example}
\noindent
Our example illustrates that different solutions in ASP programs can
easily be hard to comprehend.  Problem specific, handcrafted encoding
techniques to navigate the solution space can be quite tedious.

Instead, we propose a formal and general framework for interactive
navigation towards desired subsets of answer sets analogous to faceted
browsing in the field of information retrieval~\cite{Tunkelang09}. Our
approach enables solution space exploration by consciously zooming in
or out of sub-spaces of solutions at a certain configurable pace.
To this end we 
introduce absolute and relative weights to quantify the size of the search space
when reasoning under assumptions (facets). We formalize several kinds of search
space navigation as goal-oriented and explore modes, and systematically compare
the introduced weights regarding their usability for operations under natural
properties splitting, reliability, preserving maximal sub-spaces (min-inline),
and preserving minimal sub-spaces (max-inline). In addition, we illustrate the
computational complexity for computing the weights.  Finally, we provide an
implementation on top of the solver \verb"clingo" demonstrating the feasibility
of our framework for incomprehensible solution spaces.

\paragraph{Related Work.}
\citex{10.1007/978-3-319-99906-7_14} proposed a framework in which
solutions are systematically pruned with respect to facets (partial
solutions).
While this allows one to move within the answer set space, the
user has absolutely no information on how big the effect of activating
a facet is in advance, 
similar to assumptions in 
propositional satisfiability~\cite{EenSorensson04a}.
We go far beyond and %
characterize the \emph{weight} of a facet. 
This is useful
to comprehend the effect of navigation steps on the size of the
solution space. %
Additionally, this allows for zooming into or out
of the solution space at a configurable pace.
Debugging in answer sets has widely been investigated~\cite{OetschPT18,
   DodaroGRRS19, VosKOPT12, Shchekotykhin15,
  GebserEtAl08}.
However, we do not aim to correct ASP encodings. All answer sets which are reachable within the navigation are ``original'' answer sets,
thus the adaptions we make during the navigation to the program, do not change the set of answer sets of the initial program.
Justifications, which describe the support for the truth value of each
atom, have been studied as a tool for reasoning and
debugging~\cite{El-KhatibPontelliSon05}.
Probabilistic reasoning frameworks for logic programs were developed
such as $\text{LP}^{\text{MLN}}$~\cite{LeeTalsaniaWang17}, which
define notions of probabilities in terms of relative occurrences of
stable models and their weights. Computing these probabilities (unless
restricted to decision versions in terms of being different from zero)
relates to counting probabilities under assumptions. Considering
relative occurrences of stable models of weight one relates to search
space exploration.
However, probabilistic frameworks primarily address modeling
conflicting information %
and reason about them. We assume large solution spaces and aim for
navigating dynamically in the solution space.

\section{Background}
First, we recall basic notions of  ASP, for further
details on ASP we refer to standard texts~\cite{CalimeriFGIKKLM20,gebser2012answer}. 
Then, we introduce fundamental notions of faceted navigation and computational
complexity, respectively.
\paragraph{Answer Set Programming.}
By $\A$
we denote the set of (non-ground)
\emph{atoms} of a program $\lp$. A literal is an atom $\alpha \in
\A$ or its \emph{default negation}, which refers to the absence of
information, denoted by $\simnot\alpha$.
 An atom $\alpha$ is a
predicate $p(t_0, \dots,t_n)$ of arity $n \geq 0$
where each $t_i$ for $0 \leq i \leq n$ is a \emph{term}, i.e., either a 
variable or a constant. We say an atom $\alpha \in \A$ is \emph{ground} if and
only if $\alpha$ is variable-free. By $\mathit{Grd}(\A)$ we denote ground atoms.
A (disjunctive) logic program $\lp$
is a finite set of rules $r$ of the form
 $$\alpha_0\,|\, \ldots \,|\, \alpha_k \leftarrow \alpha_{k+1}, \dots, \alpha_m,  \simnot \alpha_{m+1}, \dots, \simnot
  \alpha_n$$
where $0 \leq k \leq m \leq n$ and each $\alpha_i \in \A$ for $0 \leq i \leq n$.
For a rule $r$ we denote the head by $H(r) \coloneqq \{\alpha_0, \dots, \alpha_k
\}$, the body $B(r)$ consists of  the positive body $B^+(r) \coloneqq \{\alpha_{k+1},
\dots, \alpha_m\}$, and the negative body $B^-(r) \coloneqq \{ \alpha_{m+1}, \dots, 
\alpha_n\}$.
If $B(r) = \emptyset$, we omit $\leftarrow$.
 A rule $r$ where $H(r) = \emptyset$ is called \emph{integrity constraint} and
 avoids that $B(r)$ is evaluated positively. 
By $\mathit{grd}(r)$ we denote the set of ground instances of some rule~$r$,
obtained by replacing all variables in $r$ by ground terms. Accordingly,
$\mathit{Grd}(\lp) \coloneqq \bigcup_{r \in \lp}\mathit{grd}(r)$ denotes the ground
instantiation of $\lp$.  Without any explicit contrary indication, throughout
this paper, we use the term (logic) program to refer to grounded disjunctive
programs where $\A = Grd(\A)$.  
An interpretation $X \subseteq \A$ satisfies a rule
$r \in \lp$ if and only if $H(r) \cap X \neq \emptyset$ whenever $B^+(r)
\subseteq X$ and $B^-(r) \cap X = \emptyset$. $X$ satisfies $\lp$, if
$X$ satisfies each rule $r \in \lp$. An interpretation $X$ is a \emph{stable model} (also called
\emph{answer set}) of $\lp$ if and only if $X$ is a subset-minimal model
satisfying the Gelfond-Lifschitz reduct of $\lp$ with respect to  $X$, defined
as $\lp_{X} \coloneqq \{H(r) \leftarrow B^+(r) \mid X \cap B^-(r) = \emptyset, r
\in \lp\}$.
By $\AS$ we denote the answer sets of~$\lp$. For computing facets, we rely on
two notions of consequences of a program, namely, \emph{brave} consequences $\BC
\coloneqq \bigcup \AS$ and \emph{cautious} consequences $\CC \coloneqq \bigcap \AS$.

\paragraph{Faceted Navigation.}
\emph{Faceted
answer set navigation} is characterized as a sequence of navigation steps
restricting the solution space with respect to partial solutions.  Those partial
solutions, called \emph{facets}, correspond to ground atoms of a program $\lp$
that are not contained in each solution. We denote the \emph{facets} of~$\Pi$ by $\F \coloneqq \FI \cup \FE$ 
where $\FI \coloneqq \BC \setminus \CC$ denotes \emph{inclusive facets} and
$\FE \coloneqq \{\overline{\alpha} \mid \alpha \in \FI\}$ denotes \emph{exclusive
facets} of $\lp$. We say an interpretation $X \subseteq \A$ satisfies an
inclusive facet $f \in \FI$, if $f \in X$, which we denote by $X \models f$, and
it satisfies an exclusive facet $f \in \FE$, if $f \not \in X$.

A navigation step is a transition from one program to another, obtained by
adding some integrity constraint that enforces the atom refered to by some
inclusive or exclusive facet to be present or absent, respectively, throughout
answer
sets. By $ic(f)$ we denote the function that translates a facet $f \in \{\alpha,
\overline{\alpha}\} \subseteq \F$ into a singleton program that contains its
corresponding integrity constraint:
    \[
            \mathit{ic}(f) \coloneqq \begin{cases} \{\leftarrow \simnot
              \alpha\}, & \text{if } f = \alpha;\\ \{\leftarrow
              \alpha\}, & \text{otherwise.}
            \end{cases}
    \]
Accordingly, a navigation step from $\lp$ to $\lp'$ is obtained by modifying
$\lp$ such that  $\lp' = \lp \cup \mathit{ic}(f)$. Faceted navigation w.r.t. some
program $\lp$ is possible as long as $\F \neq \emptyset$.
\citex{10.1007/978-3-319-99906-7_14} established that if $f \in \F$, then $\lp'
\coloneqq \lp \cup ic(f)$ is satisfiable and $\AS[\lp'] = \{X \in \AS \mid X \models
f\}$. When referring to $\AS$ as a solution space, we refer to the topological
space induced by $2^{\AS}$ on $\AS$. Thus, answer set navigation means choosing
among subsets of answer sets.

\paragraph{Computational Complexity.}
We assume that the reader is familiar with the main concepts of
computational complexity theory~\cite{Papadimitriou94,AroraBarak09}
and follows standard terminology in the area of counting complexity
\cite{DurandHermannKolaitis05,HemaspaandraVollmer95a}.
Recall that \Ptime and \NP are the complexity classes of all
deterministically and non-deterministically polynomial-time solvable
decision problems~\cite{Cook71}, respectively.
For a complexity class~$\text{C}$, \text{co-C} denotes the class of
all decision problems whose complement is in $\text{C}$.
We are also interested in the polynomial
hierarchy~\cite{StockmeyerMeyer73,Stockmeyer76,Wrathall76} defined as
follows:
$\Delta^p_0 \coloneqq \Pi^p_0 \coloneqq \Sigma^p_0 \coloneqq \Ptime$
and $\Delta^p_i \coloneqq P^{\Sigma^p_i}$,
$\Sigma^p_i \coloneqq \NP^{\Sigma^p_i}$,
$\Pi^p_i \coloneqq \co\NP^{\Sigma^p_i}$ for $i>0$ where $C^{D}$ is the
class~$C$ of decision problems augmented by an oracle for some
complete problem in class $D$.
Further, $\PH \coloneqq \bigcup_{k \in \mathbb{N}} \Delta^p_k$.
Note that $\NP = \Sigma^p_1$, $\co\NP = \Pi^p_1$,
$\Sigma^p_2 = \NP^\NP$, and $\Pi^p_2 = \co\NP^\NP$.
If $\mathcal C$ is a decision complexity class then
$\#\cdot\mathcal C$ is the class of all counting problems whose
witness function $w$ satisfies (i) $\exists$ polynomial $p$ such that
for all $y\in w(x)$, we have that $|y|\leqslant p(|x|)$, and (ii) the
decision problem ``given $x$ and $y$, is $y\in w(x)$?'' is in
$\mathcal C$.
A \emph{witness} function is a function
$w\colon\Sigma^*\to\mathcal P^{<\omega}(\Gamma^*)$, where $\Sigma$ and
$\Gamma$ are alphabets, mapping to a finite subset of $\Gamma^*$. Such
functions associate with the counting problem ``given $x\in\Sigma^*$,
find $|w(x)|$''.

\section{Routes and Navigation Modes}
We introduce \emph{routes} as a notion for characterizing sequences of navigation steps.
\begin{definition}\label{def:routes}
    A \emph{route} $\delta$ is a finite sequence $\route[f_1, \dots, f_n]$ of
    facets $f_i \in \F$ such that $0 \leq i \leq n \in \mathbb{N}$, denoting $n$
    arbitrary navigation steps over $\lp$. We say $\delta$ is a \emph{subroute}
    of $\delta'$, denoted by $\delta \sqsubseteq \delta'$, whenever if $f_i \in
    \delta$, then $f_i \in \delta'$.  We define
    $\lp^{\delta} \coloneqq \lp \cup \mathit{ic}(f_1) \cup \dots \cup
    \mathit{ic}(f_n)$.  By $\De{}$ we denote all possible routes over $\AS$,
    including the empty route $\epsilon$.
\end{definition}
\noindent
It is easy to see that any permutation of navigation steps of a fixed set of
facets always leads to the same solutions. In general, different routes may
lead to the same subset of answer sets.
We say two routes $\delta, \delta' \in \De{}$ are equivalent if and only if
$\AS[\lp^{\delta}] = \AS[\lp^{\delta'}]$.
\noindent To ensure satisfiable programs, we aim to select so called \emph{safe}
routes.
By $\De{s} \coloneqq \{\delta \in \De{} \mid \AS[\lp^{\delta}] \neq \emptyset\}$ we
define \emph{safe routes} over $\AS$.
Once an unsafe route is taken, some sort of \emph{redirection}, which relates to
the notion of \emph{correction sets} \cite{10.1007/978-3-319-99906-7_14}, i.e.,
a route obtained by retracting conflicting facets, is required to continue
navigation.
For a program $\lp$, $\delta \in \De{}$ and $f \in \F$. 
    We denote all \emph{redirections} of $\delta$ with respect to $f$ by
    $\RE{\delta} \coloneqq \{\delta' \sqsubseteq \delta \mid f \in \delta',
    \AS[\lp^{\delta'}] \neq \emptyset\} \cup \{\epsilon\}$.
The following example illustrates faceted navigation.
\begin{example}\label{ex:Pi1}
  Consider program~$\lp_1 = \{a\,|\,b; c\,|\,d \leftarrow b; e\}$.  It is easy to
  observe that the answer sets are $\AS[\lp_1] = \{\{a, e\}$, $\{b, c, e\}$,
  $\{b, d,e\}\}$. Thus, we can choose from facets $\F[\lp_1] = \{a, b, c, d,
  \overline{a}, \overline{b}, \overline{c}, \overline{d}\}$. As illustrated in
  Figure~\ref{fig:gofree}, if we activate facet $a$ we land at
  $\AS[\lp_1^{\route[a]}] = \{\{a, e\}\}$. Activating $b$ on $\route[a]$ gives
  $\AS[\lp_1^{\route[a, b]}] = \emptyset$. To redirect $\route[a, b]$ we can
  choose from $\RE[b]{\route[a, b]}= \{\route[b]\}. $
\end{example}
\begin{figure}
    \centering
    \begin{tikzpicture}[
                    level/.style={sibling distance=34mm/#1}, >=latex,
      ]

            \node (a) {\small $\{\{a, e\}, \{b, c, e\}, \{b, d,
              e\}\}$} child { node (b) {\underline{\small $\{\{a,
                  e\}\}$}} edge from parent [->] node [above left]
              {\small $\langle a \rangle$} } child { node (c) {\small
                $\{\{b, c, e\}, \{b, d, e\}\}$} child { node (d)
                {\underline{\small $\{\{b, c, e\}\}$}} edge from
                parent [->] node [above left] {\small $\langle
                  \overline{a}, c \rangle$} (d) edge [->, dashed] node
                [below left] {\small $\langle
                  \textcolor{red}{\overline{a}}, \textcolor{red}{c}, a
                  \rangle$} (b) (c) edge [<-, dashed] node [above]
                {\small $\langle \textcolor{red}{a}, b\rangle$} (b) }
              child { node (e) {\underline{\small $\{\{b, d, e\}\}$}}
                edge from parent [->] node [above right] {\small
                  $\langle \overline{a}, \overline{c} \rangle$} } edge
              from parent [->] node [above right] {\small $\langle
                \overline{a} \rangle$} };
    \end{tikzpicture}
    \caption{Goal-oriented and free navigation on program $\lp_1$.}
    \label{fig:gofree}
\end{figure}
\noindent We consider two more notions for identifying routes that point to a
unique solution.  A set of facets is a delimitation, if any safe route
constructible thereof leads to a unique answer set. This means that any further
step would lead to an unsafe route.
\begin{definition}\label{def:delimitation}
  Let $\lp$ be a program and $F, F' \subseteq \F$ \shortversion{such that}\longversion{such that} $F \coloneqq \{f_1, \dots,
  f_n\}$.  We define $\tau(F)$ as all permutations of $\delta \coloneqq \route[f_1,
  \dots, f_n]$ and say $F$ is \emph{delimiting} with respect to $\lp$, if
  $\tau(F) \subseteq \De{s}$ and $\forall F' \supset F: \tau(F') \not \subseteq
  \De{s}$.  By $\DF \subset 2^{\F}$ we denote the set of \emph{delimitations}
  over $\F$.
\end{definition}
\noindent
We call a route consisting of delimiting facets \emph{maximal safe}.
\begin{definition}\label{def:maxsafe}
    Let $\lp$ be a program, $F \subseteq \F$ and $\delta \in \tau(F) \subseteq
    \De{}$. We call $\delta$ \emph{maximal safe}, if and only if $F \in \DF$. By
    $\De{\mathit{ms}}$ we denote the set of maximal safe routes in $\AS$.
\end{definition}
\noindent In fact, each delimitation corresponds to a unique solution.
\shortversion{\begin{lemma}[$\star$\footnote{Statements marked by ``$\star$'' are proven in the extended version: \url{some_url}}]}
\longversion{\begin{lemma}}
	\label{lem:maxsafecar}
	Let $\lp$ be a program, $F \subseteq \F$ and $\delta \in \tau(F) \subseteq
	\De{}$. If $\delta \in \De{\mathit{ms}}$, then $|\AS[\lp^{\delta}]| = 1$.
\end{lemma}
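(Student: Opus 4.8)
The plan is to prove the two bounds $|\AS[\lp^{\delta}]| \geq 1$ and $|\AS[\lp^{\delta}]| \leq 1$ separately. The lower bound is immediate: since $\delta \in \De{\mathit{ms}}$ we have $F \in \DF$, so $\tau(F) \subseteq \De{s}$, and in particular $\delta \in \De{s}$, whence $\AS[\lp^{\delta}] \neq \emptyset$. All the work lies in the upper bound, which I would establish by contradiction: assuming two distinct answer sets exist, I would exhibit a facet that can be appended to $F$ without losing safety, contradicting the maximality built into the definition of a delimitation.

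Concretely, suppose $X, Y \in \AS[\lp^{\delta}]$ with $X \neq Y$. First I would record the iterated form of the single-step result of \citex{10.1007/978-3-319-99906-7_14}, namely that applying it along every step of $\delta$ yields $\AS[\lp^{\delta}] = \{Z \in \AS \mid Z \models f_i \text{ for every } f_i \text{ in } \delta\}$; in particular $X, Y \in \AS$. I would then pick an atom $\alpha$ lying in exactly one of $X$ and $Y$ and assume without loss of generality that $\alpha \in X \setminus Y$. Since $\alpha \in X \in \AS$ we get $\alpha \in \BC$, and since $\alpha \notin Y \in \AS$ we get $\alpha \notin \CC$; hence $\alpha \in \BC \setminus \CC = \FI \subseteq \F$. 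Thus $\alpha$ is a genuine facet of the \emph{original} program~$\lp$.

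Next I would show $\alpha \notin F$. If $\alpha$ occurred among the steps of $\delta$, then $\mathit{ic}(\alpha) = \{\leftarrow \simnot\alpha\} \subseteq \lp^{\delta}$ would force every answer set of $\lp^{\delta}$ to contain $\alpha$, contradicting $\alpha \notin Y \in \AS[\lp^{\delta}]$. Hence $F' \coloneqq F \cup \{\alpha\}$ satisfies $F' \supsetneq F$ and $F' \subseteq \F$. For any $\delta' \in \tau(F')$ we have $\lp^{\delta'} = \lp^{\delta} \cup \mathit{ic}(\alpha)$, and since adding the integrity constraint $\leftarrow \simnot\alpha$ only filters out answer sets violating it, $\AS[\lp^{\delta'}] = \{Z \in \AS[\lp^{\delta}] \mid \alpha \in Z\}$. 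This set contains $X$ and is therefore nonempty, so (using the permutation-invariance of routes noted after Definition~\ref{def:routes}) $\tau(F') \subseteq \De{s}$. This contradicts the requirement $\forall F' \supset F \colon \tau(F') \not\subseteq \De{s}$ coming from $F \in \DF$. Hence no two distinct answer sets exist, giving $|\AS[\lp^{\delta}]| \leq 1$, and together with the lower bound $|\AS[\lp^{\delta}]| = 1$.

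I expect the only delicate step to be verifying that the distinguishing atom $\alpha$ is simultaneously a facet of $\lp$ itself and not already forced along the route, i.e.\ that $\alpha \in \F$ and $\alpha \notin F$, so that $F \cup \{\alpha\}$ is a \emph{legitimate} proper superset of $F$ inside $\F$ to which the maximality clause of the delimitation applies. Everything else reduces to the monotone filtering behaviour of integrity constraints and the iterated single-step characterization of $\AS[\lp^{\delta}]$.
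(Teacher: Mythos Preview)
Your proof is correct and follows essentially the same approach as the paper: both obtain the lower bound from $\tau(F)\subseteq\De{s}$ and the upper bound by contradiction, producing a facet outside $F$ whose activation remains safe and thereby violating the maximality clause of the delimitation. The only cosmetic difference is that the paper phrases the upper bound via the intermediate claim $\F[\lp^{\delta}]=\emptyset$ (and then invokes that $|\AS[\lp^{\delta}]|>1$ would force $\F[\lp^{\delta}]\neq\emptyset$), whereas you unfold this directly by exhibiting the distinguishing atom $\alpha$; the underlying argument is identical.
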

\begin{proof}%
  Let $\lp$ be a program, $F, F' \subseteq \F$ and $\delta \in \tau(F) \subseteq
  \De{}$. Suppose $\delta \in \De{ms}$. Then $F \in \DF$ so that
  $\tau(F) \subseteq \De{s}$ and $\forall F' \supset F: \tau(F') \not \subseteq
  \De{s}$. Since $\tau(F) \subseteq \De{s}$, we have that $|\AS[\lp^\delta]| > 0$. Note that
  $\F[\lp^{\delta}] \subseteq \F$. By assumption we have $\forall F' \supset F:
  \tau(F') \not \subseteq \De{s}$, hence there is no facet $f \in \F \setminus
  F$ that can be activated in a way that $\lp^{\delta}$ would not become
  unsatisfiable, so that $\F[\lp^{\delta}] = \emptyset$. Now suppose
  $\AS[\lp^{\delta}] > 1$.  Then $|\F[\lp^{\delta}]| = |\BC \setminus \CC| > 0$,
  which contradicts $\F[\lp^{\delta}] = \emptyset$ and concludes the proof.
    
\end{proof}

\begin{theorem}
	\label{thm:delimitations} 
	$|\AS| = |\DF|$.
\end{theorem}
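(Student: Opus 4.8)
The plan is to exhibit an explicit bijection between $\DF$ and $\AS$, using Lemma~\ref{lem:maxsafecar} to obtain a well-defined map from delimitations to answer sets and a canonical construction to invert it. For a delimitation $F \in \DF$ and any $\delta \in \tau(F)$, Definition~\ref{def:maxsafe} gives $\delta \in \De{\mathit{ms}}$, so Lemma~\ref{lem:maxsafecar} yields $|\AS[\lp^{\delta}]| = 1$. Since any permutation of a fixed facet set induces the same answer sets, the unique element of $\AS[\lp^{\delta}]$ does not depend on the chosen $\delta \in \tau(F)$; call it $\phi(F)$. This defines a map $\phi \colon \DF \to \AS$, and it remains to show that $\phi$ is a bijection.

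The key technical ingredient I would record first is a \emph{filtering property}: for any $F \subseteq \F$ and $\delta \in \tau(F)$ one has $\AS[\lp^{\delta}] = \{Y \in \AS \mid Y \models f \text{ for all } f \in F\}$. This follows by iterating the single-step identity $\AS[\lp \cup \mathit{ic}(f)] = \{Y \in \AS \mid Y \models f\}$ of \citex{10.1007/978-3-319-99906-7_14}, since each $\mathit{ic}(f)$ is an integrity constraint that merely discards the answer sets violating $f$. To invert $\phi$, I would fix $X \in \AS$ and take as candidate preimage the set of all facets that $X$ satisfies,
\[
  F_X \coloneqq \{\alpha \in \FI \mid \alpha \in X\} \cup \{\overline{\alpha} \in \FE \mid \alpha \notin X\}.
\]

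I would then verify $F_X \in \DF$ with $\phi(F_X) = X$. By construction $X \models f$ for every $f \in F_X$, so the filtering property gives $X \in \AS[\lp^{\delta}]$ for $\delta \in \tau(F_X)$; hence $\tau(F_X) \subseteq \De{s}$. For maximality, observe that $\F = \FI \cup \FE$ with $\FE = \{\overline{\alpha} \mid \alpha \in \FI\}$, and that $F_X$ already contains exactly one of $\alpha, \overline{\alpha}$ for each $\alpha \in \FI$. Thus any $F' \supsetneq F_X$ must contain, for some $\alpha$, both the facet in $F_X$ and its opposite, forcing $\alpha$ to be simultaneously present and absent in every answer set; so $\AS[\lp^{\delta'}] = \emptyset$ for every $\delta' \in \tau(F')$, i.e.\ $\tau(F') \not\subseteq \De{s}$. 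Hence $F_X \in \DF$, and since $X \in \AS[\lp^{\delta}]$ while $|\AS[\lp^{\delta}]| = 1$, we obtain $\phi(F_X) = X$, which proves surjectivity.

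For injectivity I would show $F_X$ is the \emph{only} preimage of $X$. If $F \in \DF$ with $\phi(F) = X$, then $X \in \AS[\lp^{\delta}]$ for $\delta \in \tau(F)$, so the filtering property forces $X \models f$ for all $f \in F$, i.e.\ $F \subseteq F_X$. Were this inclusion strict, applying maximality of $F$ to $F' = F_X \supsetneq F$ would require $\tau(F_X) \not\subseteq \De{s}$, contradicting $\tau(F_X) \subseteq \De{s}$ established above; hence $F = F_X$. Consequently each $X$ has exactly one preimage, $\phi$ is a bijection, and $|\AS| = |\DF|$. I expect the main obstacle to be the maximality verification in the surjectivity step: making precise that the only facets lying outside $F_X$ are the opposites of facets inside it, which is exactly where the structural identity $\FE = \{\overline{\alpha} \mid \alpha \in \FI\}$ is needed so that $F_X$ saturates one choice per atom of $\FI$.
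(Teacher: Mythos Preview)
Your proof is correct and uses the same bijection as the paper, $\phi(F)=\bigcup\AS[\lp^{\delta}]$ for $\delta\in\tau(F)$, with essentially the same inverse construction $F_X$ (the paper builds it as ``$F'$ extended by all $\overline{\alpha}$ with $\alpha\notin F'$'', which is your $F_X$, though the paper splits off an unnecessary case $F'=\emptyset$). The one substantive organisational difference is injectivity: the paper argues directly that $F\neq F'$ forces a distinguishing facet, whereas you prove the stronger statement that every $X$ has a \emph{unique} preimage, by showing any preimage $F$ satisfies $F\subseteq F_X$ via the filtering property and then invoking maximality of $F$ against the already-safe $F_X$. Your route is a bit cleaner---it makes the inverse map explicit and avoids the somewhat informal ``there exists a facet $f''$ not satisfied by both'' step---while the paper's direct argument avoids having to first establish that $F_X$ is safe before treating injectivity.
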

\begin{proof}%
  \leavevmode
  Let $\lp$ be a program and $F, F' \subseteq \F$. We need to show that $g: \DF
  \rightarrow \AS$ defined by $g(F) \coloneqq \bigcup \AS[\lp^{\delta}]$ such that  $\delta
  \in \tau(F)$ is bijective. Note that $g$ is a total function, since by
  Definition~\ref{def:maxsafe} we have $\delta \in \De{\mathit{ms}}$ and due to
  Lemma~\ref{lem:maxsafecar}, if $\delta \in \De{\mathit{ms}}$, then
  $|\AS[\lp^{\delta}]| = 1$, so that $g(F) = \bigcup \AS[\lp^{\delta}] \in \AS$.
    
  \paragraph{Injectivity:}
  Let $F, F' \in \DF$, $\delta \in \tau(F)$, $\delta' \in
  \tau(F')$ and $X, X' \subseteq \BC$. Suppose $F \neq
  F'$. It is easy to see that answer sets delimited by $F, F'$
  respectively are of the form $\bigcup \AS[\lp^{\delta}] = X
  \cup \CC$ and $\bigcup \AS[\lp^{\delta'}] = X' \cup \CC$
  such that $\forall f \in F: X \models f$ and $\forall f' \in F': X'
  \models f'$. However, since by assumption $F, F' \in \DF$ and $F
  \neq F'$, there exists a facet $f'' \in F \cup F'$ that is not
  satisfied by both $X$ and $X'$, hence $X \neq X'$, so that $\bigcup
  \AS[\lp^{\delta}] \neq \bigcup
  \AS[\lp^{\delta'}]$. Therefore by contraposition, if $g(F)
  = g(F')$, then $F = F'$.
  \paragraph{Surjectivity:} We need to show that $\forall X \in \AS \exists F \in \DF:
  g(F) = X$. Let $X \in \AS$ and $F' \subseteq \FI \subseteq \BC$ be
  an arbitrary set of inclusive facets of $\lp$. Note that, since $F'
  \subseteq \BC$, we can characterize any answer set $X \in \AS$ by $X
  = F' \cup \CC$.  We can make the distinction of cases:
  \begin{enumerate}
    \item Suppose $F' \neq \emptyset$. Then, since $F' \subseteq \FI$,
          there exists at least one route $\delta' \in \tau(F') \subseteq \De{}$ such that 
          $\bigcup \AS[\lp^{\delta'}] = X = F' \cup \CC$. It is easy
          to see that we can extend $F'$ to $F''$ by adding all facets $\overline{\alpha} \in \FE$ such that 
          $\alpha \not \in F'$, thus $X \models \overline{\alpha}$, in order to obtain a maximal 
          safe route $\delta'' \in \tau(F'') \subset \De{ms}$, which points to $X$. Therefore $g(F'') = X$.
    \item Suppose $F' = \emptyset$. Then $X = \CC$. Note that $\forall \alpha
          \in \F: \emptyset \models \overline{\alpha} \text{ and } \emptyset \not \models \alpha$. 
          Therefore routes to reach $X$ by must
          contain at least all exclusive facets $f \in \FE$ and no inclusive
          facets $f' \in \FI$ of $\lp$, hence we can conclude that if $\delta \in \tau(\FE)$, then
          $\bigcup \AS[\lp^{\delta}] = X$. It is easy to see that if
          a supersequence $\delta'$ of $\delta$ contains no inclusive facet, then
          $\delta'$ is equivalent to $\delta$, and otherwise $\delta'$ is not safe.
          Therefore $\delta$ has to be maximal safe and $\FE$ has to be 
          delimiting, hence $g(\FE) = X$. 
  \end{enumerate}
  Since $g$ is a bijection, we conclude $|\AS| = |\DF|$.
    
\end{proof}
As mentioned, using routes and facets, there are several ways to explore
solutions. A \emph{navigation mode} is a function that prunes the solution space
according to a search strategy that involves routes and facets.
\begin{definition}
  Let $X_i \in 2^{\De{}} \cup 2^{\F}$ where $0 \leq i \leq n \in \mathbb{N}$. A
  \emph{navigation mode} is a function $$\nu: X_0 \times \dots \times X_n
  \rightarrow 2^{\AS}$$ that maps an $n$-ary Cartesian product over subsets of
  routes over $\lp$ and facets of $\lp$ to answer sets of $\lp$.
\end{definition}
\noindent The idea of \emph{free} and \emph{goal-oriented} navigation was
mentioned by~\citex{10.1007/978-3-319-99906-7_14}.  While  free navigation
follows no particular strategy, during goal-oriented navigation, we narrow down
the solution space.  Next, we formalize the goal-oriented navigation mode.
\begin{definition}\label{def:go}
	We define the \emph{goal-oriented} navigation mode $\vgo:
		\De{s} \times \F \rightarrow 2^{\AS}$ by:
	\[
		\vgo[(\delta, f)] \coloneqq
		\begin{cases}
			\AS[\lp^{\route[\delta, f]}], & \text{ if } f \in \F[\lp^{\delta}];\\
			\AS[\lp^{\delta}],             & \text{otherwise.}
		\end{cases}
	\]
\end{definition}
\noindent As illustrated in Figure \ref{fig:gofree}, while during goal-oriented
navigation (indicated by solid lines) the space is being narrowed down, until
some unique solution (indicated by underscores) is found, in free mode
(indicated by both dashed and solid lines) unsafe routes are being redirected,
as illustrated on route $\route[a, b]$ where $a$ is retracted.  We call the
effect of narrowing down the space \emph{zooming in}, the inverse effect
\emph{zooming out} and any effect where the number of solutions remains the
same, \emph{slide} effect, e.g., activating $a$ on route $\route[\overline{a},
c]$.

\section{Weighted Faceted Navigation}
During faceted navigation, we can zoom in, zoom out or slide. However, we are
unaware of how big the effect of activating a facet will be.  Recall that
different routes can lead to the same unique solution.  The activation of some
facet may lead to a unique solution more quickly or less quickly than the
activation of another facet, which means that during navigation one has no
information on the length of a route. Our framework provides an approach for
consciously zooming in on solutions. Introducing \emph{weighted} navigation, we
characterize a navigation step with respect to the extent to which it affects
the size of the
solution space, thereby we can navigate toward solutions at a configurable
``pace'' of navigation, which we consider to be the extent to which the current
route zooms into the solution space.

The kind of parameter that allows for configuration is called the \emph{weight}
of a facet. Weights of facets enable users to inspect effects of facets at any
stage of navigation, which allows for navigating more interactively in a
systematic way.  Any weight or pace is associated with a \emph{weighting
function} that can be defined in various ways, specifying the number of
program-related objects, e.g., answer sets.
\begin{definition}
    Let $\lp$ be a program, $\delta \in \De{}$, $f \in \F$ and $\delta' \in
    \RE{\delta}$. We call $\#: \{\lp^{\delta} \mid \delta \in \De{}\}
    \rightarrow
    \mathbb{N}$ a \emph{weighting function}, whenever $\#(\lp^{\delta}) > 0$, if
    $|\AS[\lp]| \geq 2$. The weight $\omega_{\#}$ of $f$ with respect to $\#$,
    $\lp^{\delta}$ and $\delta'$ is defined as:
    \[
    \omega_{\#}(f, \lp^{\delta}, \delta') \coloneqq
    \begin{cases}
            \#(\lp^{\delta}) - \#(\lp^{\delta'}), &\text{ if }
            \route[\delta, f] \not \in \De[\lp^{\delta}]{s} \text{and } \delta' \neq \epsilon;\\ \#(\lp^{\delta}) -
            \#(\lp^{\route[\delta, f]}), &\text{otherwise.}
    \end{cases}
    \]
\end{definition}
\noindent The pace indicates the zoom-in effect of a route with respect to a
weighting function.
\begin{definition}\label{def:pace}
    Let $\lp$ be a program such that $|\AS| \geq 2$ and $\delta \in \De{s}$.  We
    define the pace $\pace{\#}{\delta}$ of $\delta$ with respect to $\#$ as
    $\pace{\#}{\delta} \coloneqq \nicefrac{\#(\lp) - \#(\lp^{\delta})}{\#(\lp)}$.
\end{definition}
Before we instantiate weights with actual weighting functions, we identify
desirable properties of weights.
Most importantly, weights should indicate zoom-in effects of facets on safe
routes,~i.e., a weight should identify which facets lead to a proper sub-space
of answer sets.
\begin{definition}
    We call a weight $\omega_{\#}$ \emph{safe-zooming}, whenever if $f
    \in \F[\lp^{\delta}]$, then $\omega_{\#}(f, \lp^{\delta}, \epsilon) > 0$
    for $\delta \in \De{s}$.
\end{definition}
\noindent Essentially, whenever a weight is \emph{safe-zooming} it is useful to
to inspect zoom-in effects during goal-oriented navigation. 
\begin{definition}
    We call a weight $\omega_{\#}$ \emph{splitting}, if $\#(\lp^{\delta}) =
    \omega_{\#}( \alpha, \lp^{\delta}, \delta') + \omega_{\#}(
    \overline{\alpha}, \lp^{\delta}, \delta')$ for $\delta, \delta' \in \De{s}$
    and $\alpha, \overline{\alpha} \in \F[\lp^{\delta}]$.
\end{definition}
\noindent\emph{Splitting} weights are useful during goal-oriented
navigation, as any permissible route $\delta$ in $\vgo$ is safe and if
$\#(\lp^{\delta})$ and the weight of a facet $f \in \F[\lp^{\delta}]$
for $\delta \in \De{s}$ are known, we can compute the weight of the
respective inverse facet $f' \in \F[\lp^{\delta}]$ arithmetically and
thus avoid computing $\#(\lp^{\route[\delta, f']})$.
\begin{definition}
    We call a weight $\omega_{\#}$ \emph{reliable}, whenever
    $\omega_{\#}{(f, \lp^{\delta}, \epsilon)} = \#(\lp^{\delta})$ if and only if
    $\route[\delta, f] \not \in \De{s}$ for $\delta \in \De{}$ and $f
    \in \F$.
\end{definition}
\noindent The benefit of \emph{reliable} weights, on the other hand, is that
they indicate unsafe routes. Hence, reliability can be ignored during
goal-oriented navigation, but appears to be useful during free navigation.

As we are focused on narrowing down the solution space, we want to know, whether
the associated weighting function $\#$ of a weight detects maximal or minimal,
respectively,  zoom-in effects on safe routes.
\begin{definition}
    For a program $\lp$, $\delta \in \De{}$ and $f \in F$, then:
  \begin{itemize}
    \item $f$ is \emph{maximal weighted}, denoted by
    $f \in max_{\omega_{\#}}(\lp^{\delta})$, if
    $\forall f' \in \F[\lp^{\delta}]: \omega_{\#}(f, \lp^\delta, \epsilon)
    \geq \omega_{\#}(f', \lp^\delta, \epsilon)$;
    \item $f$ is \emph{minimal weighted}, denoted by
    $f \in min_{\omega_{\#}}(\lp^{\delta})$, if
    $\forall f' \in \F[\lp^{\delta}]: \omega_{\#}(f, \lp^\delta, \epsilon)
    \leq \omega_{\#}(f', \lp^\delta, \epsilon)$.
  \end{itemize}
\end{definition}
\noindent
A weight is min-inline, if every minimal weighted facet leads to a maximal
sub-space of solutions.  Analogously, a weight is max-inline, if every maximal
weighted facet leads to a minimal sub-space.
\begin{definition}
  Let $\lp$ be a program, $\delta \in \De{s}$ and $f \in
  \F[\lp^{\delta}]$.  We call a weight $\omega_{\#}$
    \begin{itemize}
      \item \emph{min-inline}, whenever $f \in
        min_{\omega_{\#}}(\lp^{\delta})$ if and only if $$\forall f' \in
        \F[\lp^{\delta}] \setminus min_{\omega_{\#}}(\lp^{\delta}):
        |\AS[\lp^{\route[\delta, f]}]| >
        |\AS[\lp^{\route[\delta,f']}]|\text{;}$$ 
      \item \emph{max-inline}, whenever $f \in
        max_{\omega_{\#}}(\lp^{\delta})$ if and only if $$\forall f' \in
        \F[\lp^{\delta}] \setminus max_{\omega_{\#}}(\lp^{\delta}):
        |\AS[\lp^{\route[\delta, f]}]| <
        |\AS[\lp^{\route[\delta,f']}]|.$$
    \end{itemize}
\end{definition}

Below, we introduce the \emph{absolute} weight of a facet, which counts answer
sets, and two so called \emph{relative} weights, which seek for  approximating
the number of solutions to compare sub-spaces with respect to their actual size,
while avoiding counting.
\subsection{Absolute Weight}\label{sec:aw}
The most natural weighting function to identify the effect of a navigation step
is to observe the number of answer sets on a route.
The absolute weight of a facet $f$ is defined as the number of solutions by
which the solution space grows or shrinks due to the activation of $f$.
  \begin{definition}
	\label{def:aw}
    The \emph{absolute weight} $\absw$ is defined by $\abs: \lp^{\delta} \mapsto |\AS[\lp^{\delta}]|$.
  \end{definition}
\begin{example}
Let us inspect Figure~\ref{fig:gofree} and the program $\Pi_1$ from Example~\ref{ex:Pi1}. As stated by $\absw(a,
\lp_1^{\route[\overline{a}, c]}, \route[a]) = 0$, activating $a$  on
$\route[\overline{a}, c]$ induces a slide. $\absw(a, \lp_1^{\route[a]}, \route[b])
= -1$. This tells us that navigating towards $b$ on $\route[a]$ zooms out by one
solution. In contrast, $\absw(b, \lp_1^{\route[c]}, \route[\overline{a}]) = 1$ means
that we zoom in by one solution.
\end{example}
\noindent By definition, the absolute weight directly reflects the effect of a navigation
step and %
satisfies all introduced properties.
\shortversion{\begin{theorem}[$\star$]}
\longversion{\begin{theorem}}
  \label{thm:awsplrel}
  The absolute weight~$\absw$ is safe-zooming, splitting, reliable, min-inline, and max-inline.
\end{theorem}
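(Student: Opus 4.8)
The plan is to reduce every property to two facts about activating a facet. First, the result of \citex{10.1007/978-3-319-99906-7_14} recalled in the Background gives, for any $\delta \in \De{s}$ and $f \in \F[\lp^{\delta}]$, that $\AS[\lp^{\route[\delta, f]}] = \{X \in \AS[\lp^{\delta}] \mid X \models f\}$ is a nonempty, proper subset of $\AS[\lp^{\delta}]$: nonempty by the cited satisfiability claim, and proper because, $f$ being a facet, at least one answer set of $\lp^{\delta}$ fails $f$. In particular $\route[\delta, f]$ is safe, so the condition ``$\route[\delta,f]\not\in\De[\lp^{\delta}]{s}$'' in the first branch of $\omega_{\#}$ fails; hence for a facet on a safe route the weight is always given by the second branch, $\absw(f, \lp^{\delta}, \delta') = |\AS[\lp^{\delta}]| - |\AS[\lp^{\route[\delta, f]}]|$, independently of $\delta'$. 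Second, for complementary facets $\alpha,\overline{\alpha}$ the sets $\{X \in \AS[\lp^{\delta}] \mid \alpha \in X\}$ and $\{X \in \AS[\lp^{\delta}] \mid \alpha \notin X\}$ partition $\AS[\lp^{\delta}]$.

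From these, safe-zooming, splitting, and reliability are immediate. For \emph{safe-zooming}, properness of the subset gives $|\AS[\lp^{\route[\delta,f]}]| < |\AS[\lp^{\delta}]|$, so $\absw(f, \lp^{\delta}, \epsilon) > 0$. For \emph{splitting}, the partition fact yields $|\AS[\lp^{\route[\delta,\alpha]}]| + |\AS[\lp^{\route[\delta,\overline{\alpha}]}]| = |\AS[\lp^{\delta}]|$, and substituting the second-branch expression for both weights, their sum simplifies to $\abs(\lp^{\delta})$. For \emph{reliability}, with $\delta' = \epsilon$ the weight is $\absw(f, \lp^{\delta}, \epsilon) = |\AS[\lp^{\delta}]| - |\AS[\lp^{\route[\delta,f]}]|$ for every $f \in \F$; this equals $\abs(\lp^{\delta})$ exactly when $\AS[\lp^{\route[\delta,f]}] = \emptyset$, i.e.\ exactly when $\route[\delta,f]\not\in\De{s}$, which is the required biconditional.

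For the two inline properties I would first record that, on a safe route, the second-branch formula makes the weight strictly anti-monotone in the remaining count: $\absw(f, \lp^{\delta}, \epsilon) \leq \absw(f', \lp^{\delta}, \epsilon)$ iff $|\AS[\lp^{\route[\delta,f]}]| \geq |\AS[\lp^{\route[\delta,f']}]|$. Hence $f \in min_{\omega_{\#}}(\lp^{\delta})$ iff $f$ maximizes $|\AS[\lp^{\route[\delta,f]}]|$ over $\F[\lp^{\delta}]$, and $f \in max_{\omega_{\#}}(\lp^{\delta})$ iff $f$ minimizes it. \emph{Min-inline} then follows: if $f$ is minimal weighted it attains the maximal remaining count, while any $f'$ outside the minimal set attains a strictly smaller count, giving $|\AS[\lp^{\route[\delta,f]}]| > |\AS[\lp^{\route[\delta,f']}]|$; conversely, if that inequality holds for all $f' \in \F[\lp^{\delta}] \setminus min_{\omega_{\#}}(\lp^{\delta})$ but $f$ were not minimal weighted, then $f$ would itself lie in that set and instantiating $f' \coloneqq f$ yields the contradiction $|\AS[\lp^{\route[\delta,f]}]| > |\AS[\lp^{\route[\delta,f]}]|$. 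The argument for \emph{max-inline} is symmetric, exchanging the roles of $\geq/\leq$ and $>/<$.

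The proof is essentially bookkeeping, and I expect no serious obstacle. The only points requiring care are (i) justifying that for a facet on a safe route one is always in the second branch of the $\omega_{\#}$ definition, so that the case split never interferes, and (ii) the self-referential instantiation $f' \coloneqq f$ used in the backward directions of min- and max-inline, which relies on $f$ ranging over the same facet set $\F[\lp^{\delta}]$ that the universal quantifier ranges over.
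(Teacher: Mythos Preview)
Your proof is correct and follows essentially the same approach as the paper: safe-zooming, reliability, and splitting are all derived from the subset description $\AS[\lp^{\route[\delta,f]}] = \{X \in \AS[\lp^{\delta}] \mid X \models f\}$ and the resulting partition by complementary facets, exactly as in the paper. For min-inline and max-inline the paper simply writes ``follows directly from Definition~\ref{def:aw}'', whereas you spell out the anti-monotonicity and the self-instantiation $f' \coloneqq f$ for the converse; this is a welcome elaboration but not a different argument.
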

\begin{proof}%
  Let $\lp$ be a program.
  \paragraph{safe-zooming:} Follows per definition of facets and the fact that if $f \in \F$, then $\AS[\lp^{\route[f]}] = \{X \in \AS \mid X \models f\}$.
  \paragraph{reliable:} Let $\delta \in \De{}$ and $f \in \F[\lp^{\delta}]$. By
  Definition~\ref{def:aw}:
  \begin{align}
    \absw(f, \lp^{\delta}, \epsilon) = |\AS[\lp^{\delta}]| -
    |\AS[\lp^{\route[\delta, f]}]|
	\label{al:splrel0} 
  \end{align}
  \paragraph{($\Rightarrow$)} Suppose $\absw(f, \lp^{\delta}, \epsilon) =
  |\AS[\lp^{\delta}]|$.  Using~(\ref{al:splrel0}) it follows that
  $|\AS[\lp^{\route[\delta, f]}]| = 0$, therefore $\route[\delta, f]
  \not \in \De{s}$.
  \paragraph{($\Leftarrow$)} Suppose $\route[\delta, f] \not \in \De{s}$.  By
  assumption $\AS[\lp^{\route[\delta, f]}] = \emptyset$, so that
  $|\AS[\lp^{\route[\delta, f]}]| = 0$.  Therefore due
  to~(\ref{al:splrel0}), we conclude that $\absw(f, \lp^{\delta},
  \epsilon) = |\AS[\lp^{\delta}]|$.
  \paragraph{splitting:} Let $\delta, \delta' \in \De{s}$ and $\alpha,
  \overline{\alpha} \in \F[\lp^{\delta}]$.  Then, since if $f \in
  \{\alpha, \overline{\alpha}\} \subseteq \F[\lp^{\delta}]$, then
  $\AS[\lp^{\route[f]}] \neq \emptyset$, it follows that
  $\lp^{\route[\delta, \alpha]}$ and $\lp^{\route[\delta,
      \overline{\alpha}]}$ are satisfiable, which means that
  $\route[\delta, \alpha], \route[\delta, \overline{\alpha}]\in
  \De[\lp^{\delta}]{s}$. Thus Definition~\ref{def:aw}
  gives~(\ref{al:splrel0}) for $f \in \{\alpha, \overline{\alpha}\}$,
  respectively, so that $\delta'$ can be ignored. Define
  $\SA[\lp^{\delta}]{\alpha} = \{X \in \AS[\lp^{\delta}] \mid X
  \models \alpha\}$ and $\SA[\lp^{\delta}]{\overline{\alpha}} = \{X
  \in \AS[\lp^{\delta}] \mid X \models \overline{\alpha}\}$. We know
  that $\SA[\lp^{\delta}]{\alpha} = \AS[\lp^{\route[\delta, \alpha]}]$
  and $\SA[\lp^{\delta}]{\overline{\alpha}} = \AS[\lp^{\route[\delta,
        \overline{\alpha}]}]$. It is easy to see that 
  \begin{align}
     \SA[\lp^{\delta}]{\alpha} \text{ and }
    \SA[\lp^{\delta}]{\overline{\alpha}} \text{ form a partition of }
    \AS[\lp^{\delta}] \label{al:splrel1}
  \end{align}
  hence:
  \begin{align*}
    |\AS[\lp^{\delta}]| & = |\SA[\lp^{\delta}]{\alpha}| +
    |\SA[\lp^{\delta}]{\overline{\alpha}}| \\ & =
    |\AS[\lp^{\route[\delta, \alpha]}]| + |\AS[\lp^{\route[\delta,
          \overline{\alpha}]}]| \\ & = (|\AS[\lp^{\delta}]|-
    |\AS[\lp^{\route[\delta, \overline{\alpha}]}]|) +
    (|\AS[\lp^{\delta}]|- |\AS[\lp^{\route[\delta, \alpha]}]|) & &
    \text{(\ref{al:splrel1})} \\ 
    & = \absw(\overline{\alpha}, \lp^{\delta}, \delta') +
    \absw(\alpha, \lp^{\delta}, \delta') \\
    & = \absw(\alpha, \lp^{\delta}, \delta') +
    \absw(\overline{\alpha}, \lp^{\delta}, \delta')
  \end{align*}
 \paragraph{min-inline:} Follows directly from Definition~\ref{def:aw}.
 \paragraph{max-inline:} Follows directly from Definition~\ref{def:aw}.

\end{proof}
\noindent Unfortunately, computing absolute weights is expensive.

\shortversion{\begin{lemma}[$\star$]\label{lem:compl:absw}}
\longversion{\begin{lemma}\label{lem:compl:absw}}
  Outputting the absolute weight~$\absw$ for a given program~$\Pi$ and
  route~$\delta$ is $\#\cdot\co\NP$-complete.
\end{lemma}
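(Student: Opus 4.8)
The plan is to prove the two directions of completeness separately, treating the task of ``outputting $\absw$'' as that of computing the count $\abs(\lp^{\delta}) = |\AS[\lp^{\delta}]|$. Note first that $\lp^{\delta}$ is again an ordinary grounded disjunctive program, since a route only adjoins the integrity constraints $\mathit{ic}(f_1),\dots,\mathit{ic}(f_n)$; hence the problem is exactly counting answer sets of a disjunctive program, and for the hardness direction it suffices to take the empty route $\delta=\epsilon$.

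For membership in $\numberDotCoNP$ I would exhibit the witness function $w(\lp^{\delta}) \coloneqq \AS[\lp^{\delta}]$. Each witness is an interpretation $X \subseteq \A$ and is therefore of size polynomial in the input, which gives condition~(i). For condition~(ii) I would argue that deciding, given $\lp^{\delta}$ and $X$, whether $X \in \AS[\lp^{\delta}]$ lies in $\co\NP$: testing that $X$ is a model of the reduct $\lp^{\delta}_X$ is polynomial, while subset-minimality holds iff \emph{no} proper subset $X' \subsetneq X$ is a model of $\lp^{\delta}_X$, and the negation of this (``some such $X'$ exists'') is an $\NP$ predicate. Thus the whole check is in $\co\NP$ and the counting problem lies in $\numberDotCoNP$.

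For hardness I would give a parsimonious reduction from the canonical $\numberDotCoNP$-complete problem of counting the assignments $\vec{x}$ to a block of free variables for which $\forall \vec{y}\,\phi(\vec{x},\vec{y})$ holds, where $\phi = \bigvee_k t_k$ is in DNF (in the Durand--Hermann--Kolaitis framework this is complete, as its witness-check is precisely the $\co\NP$ predicate $\forall \vec{y}\,\phi$). I would build a disjunctive program via the saturation technique: head disjunctions $x_i \mid \bar{x}_i$ and $y_j \mid \bar{y}_j$ to guess the two blocks, rules $\mathit{sat} \leftarrow \hat{t}_k$ (with $\hat{t}_k$ the atoms encoding the literals of $t_k$) deriving a distinguished atom $\mathit{sat}$ whenever the guessed $\vec{y}$ satisfies some term, saturation rules $y_j \leftarrow \mathit{sat}$ and $\bar{y}_j \leftarrow \mathit{sat}$, and the constraint $\leftarrow \simnot \mathit{sat}$. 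Crucially, the $\vec{x}$-disjunctions are \emph{not} saturated, so distinct $\vec{x}$-guesses yield distinct candidate answer sets.

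The main work---and the step I expect to be the real obstacle---is verifying that this reduction is parsimonious, i.e.\ that it induces a bijection between $\AS$ of the constructed program and the satisfying $\vec{x}$-assignments. Fixing an $\vec{x}$-guess, I would argue: if $\forall \vec{y}\,\phi$ holds, the saturated interpretation (all $y_j,\bar{y}_j$ and $\mathit{sat}$ true) is the unique minimal model of the reduct extending this guess, hence the single answer set for this $\vec{x}$; if instead some $\vec{y}^{*}$ falsifies $\phi$, then the interpretation carrying $\vec{y}^{*}$ without $\mathit{sat}$ is a model of the reduct strictly below the saturated one, so the latter fails minimality, while the constraint $\leftarrow \simnot \mathit{sat}$ together with the saturation rules forbids any non-saturated answer set---leaving no answer set at all for this $\vec{x}$. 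Consequently the number of answer sets equals the target count, and together with membership this yields $\numberDotCoNP$-completeness. The delicate points are getting the minimality argument under the Gelfond--Lifschitz reduct exactly right and confirming that the extra integrity constraints of a general route $\delta$ never disturb this analysis.
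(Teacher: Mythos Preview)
Your proposal is correct, but it is considerably more elaborate than what the paper actually does. The paper's proof is a one-liner: it simply observes that computing $\absw$ amounts to counting the answer sets of a disjunctive program and cites the known result that this counting problem is $\numberDotCoNP$-complete (Fichte, Hecher, Morak, Woltran, LPNMR'17). Your reduction of the task to answer-set counting for $\lp^{\delta}$ (and taking $\delta=\epsilon$ for hardness) is exactly the same bridge; the difference is that you then \emph{reprove} the cited completeness result from scratch, supplying both the $\co\NP$ witness-check argument for membership and an explicit saturation-based parsimonious reduction from $\#\Pi_1$ for hardness. What your approach buys is self-containment---the reader does not need to chase the external reference---at the cost of length and of having to get the minimality analysis under the reduct exactly right (your sketch of that analysis is sound). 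What the paper's approach buys is brevity and modularity: it isolates the non-trivial combinatorics in a citable lemma and keeps the present argument to the trivial observation that adding integrity constraints preserves the program class.
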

\begin{proof}%
  Membership and hardness can be easily established by the complexity
  of counting the number of answer sets of a disjunctive program
  $\lp$, which is known to be
  \#$\cdot$coNP-complete~\cite{fichte2017answer}.

\end{proof}

\subsection{Relative Weights}
Since computing absolute weights is computationally expensive (Lemma~\ref{lem:compl:absw}), we aim for less expensive
methods that still retain the ability to compare sub-spaces with respect to their size.  
Therefore, we investigate two \emph{relative
  weights}.

\paragraph{Facet Counting.} One approach to manipulating the number of
solutions and to keeping track of how the number changes over the course of
navigation, is to count facets.
\begin{definition}\label{def:rw}
	The \emph{facet-counting weight} $\fcw$ is defined by $\fc: \lp^{\delta} 
	\mapsto |\F[\lp^{\delta}]|$.

\end{definition}
Next, we establish a positive result in terms of
complexity. \longversion{Therefore, recall that}\shortversion{Recall} 
$\Delta^p_3 \subseteq \PH \subseteq
\Ptime^\numberP$~\cite{Stockmeyer76,Toda91}.

\newcommand{\at}[1]{\ensuremath{\mathcal{A}({#1})}^+}
\shortversion{\begin{lemma}[$\star$]\label{lem:compl:fcw}}
\longversion{\begin{lemma}\label{lem:compl:fcw}}
  Outputting the facet-counting weight~$\fcw$ for a given
  program~$\lp$ and route~$\delta$ is in $\Delta^p_3$. %
\end{lemma}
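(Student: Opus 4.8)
The plan is to show that $\fcw$ can be output by a deterministic polynomial-time procedure making polynomially many queries to a $\Sigma^p_2$ oracle, which is exactly the class $\Delta^p_3 = \Ptime^{\Sigma^p_2}$. The entry point is a structural simplification: since $\F[\lp^\delta] = \FI[\lp^\delta] \cup \FE[\lp^\delta]$ with $\FE$ in bijection to $\FI$ via $\alpha \mapsto \overline\alpha$ (and the two sets disjoint), we have $\fc(\lp^\delta) = |\F[\lp^\delta]| = 2\,|\FI[\lp^\delta]| = 2\,|\BC[\lp^\delta] \setminus \CC[\lp^\delta]|$. So it suffices to count, among the polynomially many atoms of $\lp^\delta$, those that are brave but not cautious consequences.

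First I would recall the standard complexity of reasoning over disjunctive programs: deciding $\alpha \in \BC[\lp^\delta]$ (brave reasoning) is $\Sigma^p_2$-complete, while deciding $\alpha \in \CC[\lp^\delta]$ (cautious reasoning) is $\Pi^p_2$-complete. An atom $\alpha$ belongs to $\FI[\lp^\delta]$ precisely when $\alpha \in \BC[\lp^\delta]$ and $\alpha \notin \CC[\lp^\delta]$, i.e.\ two answer sets disagree on $\alpha$. One $\Sigma^p_2$ oracle call settles the brave query directly, and a second settles the cautious query by complementing the oracle's answer, since $\Pi^p_2 = \co\Sigma^p_2$. Iterating over all $\alpha \in \A[\lp^\delta]$ and tallying those passing both tests yields $|\FI[\lp^\delta]|$; doubling gives $\fc(\lp^\delta)$. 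This is a deterministic polynomial-time computation issuing polynomially many $\Sigma^p_2$ queries, hence lies in $\Delta^p_3$.

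It then remains to assemble the weight. Computing $\fcw(f, \lp^\delta, \delta')$ requires (i) resolving the case split, which hinges on whether $\route[\delta, f] \in \De[\lp^\delta]{s}$, equivalently whether $\lp^{\route[\delta, f]}$ is satisfiable --- a single $\Sigma^p_2$ query, as satisfiability of a disjunctive program is $\Sigma^p_2$-complete; and (ii) evaluating $\fc$ at $\lp^\delta$ and at whichever of $\lp^{\delta'}$ or $\lp^{\route[\delta, f]}$ the selected case names, each via the counting routine above. As all constituents run in $\Delta^p_3$ and this class is closed under the finite deterministic composition used here, the final subtraction stays within $\Delta^p_3$.

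The step carrying the real content is the claim that brave-but-not-cautious membership is decidable with $\Sigma^p_2$ oracle access; the rest is bookkeeping around it. The one point to check carefully is that the oracle budget also covers the safety test in the case split and the facet count on the companion program $\lp^{\delta'}$. Since only a constant number of programs are inspected and each has polynomially many atoms, the total number of oracle queries remains polynomial, so the $\Delta^p_3$ upper bound is preserved.
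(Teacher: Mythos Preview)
Your proposal is correct and follows essentially the same approach as the paper: reduce the facet count to brave/cautious reasoning over the atoms of $\lp^\delta$, using that brave reasoning is $\Sigma^p_2$-complete and cautious reasoning is $\Pi^p_2$-complete, and then observe that polynomially many such oracle calls place the computation in $\Delta^p_3$. Your version is in fact slightly more careful than the paper's, which omits the factor~$2$ from the exclusive facets and does not spell out the case split in the definition of $\omega_\#$ (the satisfiability test and the evaluation of $\fc$ at the companion program); your handling of these points is sound and does not change the argument's core.
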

\begin{proof}%
  In fact, we obtain the membership result by the following
  construction.
  We have
  $|\F[\lp^{\delta}]| = |\mathcal{BC}(\lp^\delta) \setminus
  \mathcal{CC}(\lp^\delta)| = |\mathcal{BC}(\lp^\delta)| -
  |\mathcal{CC}(\lp^\delta)|$. The value of $|\mathcal{BC}(\lp^\delta)|$ is at
  most $|\at{\lp^\delta}|$ and we can compute $\mathcal{BC}(\lp^\delta)$ by
  checking for every atom~$\alpha \in \at{\lp^\delta}$ whether $\alpha$ is a brave
  consequence of $\lp^\delta$, which 
  is $\Sigma^P_2$-complete~\cite{EiterGottlob95}.
  Similar, we can check for $|\mathcal{CC}(\Pi^d)|$ whether
  $a \in \at{\Pi^d}$ is a cautious consequence of~$\Pi^\delta$, which is
  $\Pi^P_2$-complete~\cite{EiterGottlob95}. Computing the difference
  of the two integers takes time $\Theta(\log n)$.

\end{proof}
\noindent Hence, assuming standard theoretical assumptions,
counting facets is easier than counting solutions. 
However, below we show that counting facets has deficiencies, when it comes to
comprehending the solution space regarding its size.
\shortversion{\begin{lemma}[$\star$]\label{lem:ifff0}}
\longversion{\begin{lemma}\label{lem:ifff0}}
	$|\AS| \leq 1$ if and only if $|\F| = 0$.
\end{lemma}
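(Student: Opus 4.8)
The plan is to reduce the biconditional to a statement purely about inclusive facets and then settle each direction by elementary set reasoning. First I would observe that $\F = \FI \cup \FE$ is a disjoint union (inclusive facets are atoms, exclusive facets are barred atoms), and since $\FE = \{\overline{\alpha} \mid \alpha \in \FI\}$ is in bijection with $\FI$, we have $|\F| = 2|\FI|$. Hence $|\F| = 0$ if and only if $\FI = \emptyset$, i.e.\ if and only if $\BC \setminus \CC = \emptyset$. It therefore suffices to show that $|\AS| \leq 1$ holds exactly when $\BC \setminus \CC = \emptyset$.

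For the direction $|\AS| \leq 1 \Rightarrow \FI = \emptyset$, I would split into two cases. If $|\AS| = 0$, then $\BC = \bigcup \AS = \emptyset$, and since $\FI = \BC \setminus \CC \subseteq \BC$, we get $\FI = \emptyset$ immediately, irrespective of the convention chosen for the empty intersection defining $\CC$. If $|\AS| = 1$, say $\AS = \{X\}$, then $\BC = \bigcup \AS = X = \bigcap \AS = \CC$, so $\BC \setminus \CC = \emptyset$ and again $\FI = \emptyset$.

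For the converse I would argue by contraposition: assuming $|\AS| \geq 2$, I produce a facet. Pick two distinct answer sets $X, Y \in \AS$. As sets they differ, so their symmetric difference is nonempty; without loss of generality take $\alpha \in X \setminus Y$. Then $\alpha \in \BC$ because $\alpha \in X$, while $\alpha \notin \CC$ because $\alpha \notin Y$. Hence $\alpha \in \BC \setminus \CC = \FI$, so $\FI \neq \emptyset$ and $|\F| \geq 2 > 0$.

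The argument is short, and the only point requiring care is the case $|\AS| = 0$, where $\CC = \bigcap \emptyset$ is conventionally the full atom set (or otherwise left undefined); the inclusion $\FI \subseteq \BC = \emptyset$ sidesteps this entirely, so no commitment to a convention is needed. I expect this edge case to be the only subtlety, with the remainder following directly from the definitions of brave and cautious consequences.
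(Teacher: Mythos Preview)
Your argument is correct and follows essentially the same route as the paper: both reduce the claim to the equivalence between $|\AS|\leq 1$ and $\BC\setminus\CC=\emptyset$ via elementary set reasoning. The only noteworthy difference is organizational---the paper proves the forward direction by contraposition and the backward direction directly (invoking ``minimality of answer sets''), whereas you flip this, giving a direct case split for $|\AS|\leq 1$ and a contraposition for $|\AS|\geq 2$; your symmetric-difference argument in the latter is arguably cleaner since it needs no appeal to minimality, and your explicit handling of the $|\AS|=0$ edge case (sidestepping the $\bigcap\emptyset$ convention via $\FI\subseteq\BC=\emptyset$) is a nice touch the paper leaves implicit.
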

\begin{proof}%
  Let $\lp$ be a program.  
  \paragraph{($\Rightarrow$)} Suppose $|\F| > 0$. Then
  $\BC \neq \emptyset$, so that $|\AS| > 0$. Now, suppose $|\AS| = 1$.  Then
  $\BC = \CC$, which means that $|\F| = 0$ and contradicts $|\F| > 0$. 
  Therefore $|\AS| > 1$, which by contraposition concludes the proposition.
  \paragraph{($\Leftarrow$)} Suppose $|\F| = 0$. Then $\BC = \CC$.  Due to the
  minimality of answer sets we conclude that therefore either $\AS = \emptyset$,
  so that $|\AS| = 0$, or $|\AS| = 1$.  Therefore $|\AS| \leq 1$.
    
\end{proof}

\noindent From Lemma~\ref{lem:ifff0} and the fact that for program $\lp_1$  from Example~\ref{ex:Pi1} we have
$\fcw(c, \lp_{1}^{\route[\overline{a}]}, \epsilon) =
|\F[\lp_{1}^{\route[\overline{a}]}]|$, but
$\route[\overline{a}, c] \in \De[\lp_1]{s}$, we conclude that $\fcw$
is not reliable.
Furthermore, since therefore
$\fcw(c, \lp_{1}^{\route[\overline{a}]}, \epsilon) +
\fcw(\overline{c}, \lp_{1}^{\route[\overline{a}]}, \epsilon) \neq
|\F[\lp_{1}^{\route[\overline{a}]}]|$, $\fcw$ is not splitting either.

\begin{corollary}
  The facet-counting weight~$\fcw$ is not reliable and not splitting.
\end{corollary}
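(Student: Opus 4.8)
The plan is to refute both properties with a single concrete witness, namely the program $\lp_1$ of Example~\ref{ex:Pi1}, reusing the route $\delta = \route[\overline{a}]$ for both parts. First I would assemble the facet bookkeeping along this route. Excluding $a$ leaves $\AS[\lp_1^{\route[\overline{a}]}] = \{\{b,c,e\},\{b,d,e\}\}$, so $\BC[\lp_1^{\route[\overline{a}]}] = \{b,c,d,e\}$ and $\CC[\lp_1^{\route[\overline{a}]}] = \{b,e\}$, giving $\F[\lp_1^{\route[\overline{a}]}] = \{c,d,\overline{c},\overline{d}\}$ and hence $\fc(\lp_1^{\route[\overline{a}]}) = 4$. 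Since $\AS[\lp_1^{\route[\overline{a}]}] \neq \emptyset$ we have $\delta \in \De[\lp_1]{s}$, and $c,\overline{c} \in \F[\lp_1^{\route[\overline{a}]}]$, so the preconditions of both the reliability and splitting definitions are satisfied.

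For non-reliability I would extend $\delta$ by $c$. Activating $c$ isolates the unique solution $\{b,c,e\}$, i.e.\ $\AS[\lp_1^{\route[\overline{a}, c]}] = \{\{b,c,e\}\}$, so $\route[\overline{a}, c] \in \De[\lp_1]{s}$ is \emph{safe}. By Lemma~\ref{lem:ifff0}, a unique answer set forces $|\F[\lp_1^{\route[\overline{a}, c]}]| = 0$, so taking $\delta' = \epsilon$ lands us in the ``otherwise'' branch of the weight and yields $\fcw(c, \lp_1^{\route[\overline{a}]}, \epsilon) = 4 - 0 = 4 = \fc(\lp_1^{\route[\overline{a}]})$. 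This makes the left side of the reliability biconditional hold while its right side (that $\route[\overline{a}, c]$ be unsafe) fails, so the biconditional is violated for this $(\delta, f)$ and $\fcw$ is not reliable.

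For non-splitting I would run the symmetric computation on $\overline{c}$: excluding $c$ isolates $\{b,d,e\}$, so $\route[\overline{a}, \overline{c}]$ again reaches a unique solution and $|\F[\lp_1^{\route[\overline{a}, \overline{c}]}]| = 0$ by Lemma~\ref{lem:ifff0}, whence $\fcw(\overline{c}, \lp_1^{\route[\overline{a}]}, \epsilon) = 4$. Then, with $\delta = \route[\overline{a}]$ and $\delta' = \epsilon$ (both safe), the splitting identity would demand $\fcw(c, \lp_1^{\route[\overline{a}]}, \epsilon) + \fcw(\overline{c}, \lp_1^{\route[\overline{a}]}, \epsilon) = \fc(\lp_1^{\route[\overline{a}]})$, but the left side equals $8$ while the right side equals $4$. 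Hence the identity fails and $\fcw$ is not splitting.

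I do not expect a genuine obstacle here, since a single witnessing program suffices to falsify a universally quantified property; the entire argument reduces to the facet bookkeeping above combined with Lemma~\ref{lem:ifff0}. The only conceptual point worth stating is \emph{why} facet counting breaks splitting: because both $c$ and $\overline{c}$ individually collapse $\lp_1^{\route[\overline{a}]}$ to a single solution, each is credited the \emph{entire} remaining facet count as its weight, so the two contributions double-count rather than partition $\fc(\lp_1^{\route[\overline{a}]})$. Making this precise is exactly the $4+4 \neq 4$ computation, so no further case analysis is needed.
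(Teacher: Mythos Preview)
Your proposal is correct and follows essentially the same approach as the paper: the paper's argument (given in the paragraph immediately preceding the corollary) uses the same program $\lp_1$, the same route $\route[\overline{a}]$, the same facets $c$ and $\overline{c}$, and the same appeal to Lemma~\ref{lem:ifff0}. You have simply made the facet bookkeeping explicit where the paper leaves it implicit, arriving at the identical $4+4 \neq 4$ violation for splitting and the same safe-route witness $\route[\overline{a}, c]$ for non-reliability.
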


 The reason for $\fcw$ not distinguishing between one and no
solution is that we can interpret it as an indicator for how the
diversity or similarity, respectively, of solutions changes by activating a
facet. Accordingly, whenever a step leads to one or no solution, the
thereby reached sub-space contains least-diverse or most-similar solutions, respectively.
\begin{example}\label{exm:rw1}
Again consider $\Pi_1$ from Example~\ref{ex:Pi1}.
    While on the absolute level $\absw(\overline{a},
    \lp_1, \epsilon) = 1 = \absw(\overline{c}, \lp_1, \epsilon)$,
    counting facets, $\fcw(\overline{a}, \lp_1, \epsilon) = 4$ and
    $\fcw(\overline{c}, \lp_1, \epsilon) = 2$, the relative
    weights of $\overline{c}$ and $\overline{a}$ differ. The
    reason is that even though $|\AS[\lp_{1}^{
        \route[\overline{a}]}]| = |\AS[\lp_{1}^{
        \route[\overline{c}]}]|$, by activating $\overline{c}$ we
    can still navigate towards $\F[\lp_{1}^{\route[\overline{c}]}]
    = \{a, \overline{a}, b, \overline{b}, d, \overline{d}\}$, but
    activating $\overline{a}$, we can only navigate toward
    $\F[\lp_{1}^{\route[\overline{a}]}] = \{c, \overline{c}, d,
    \overline{d}\}$, i.e., answer sets that contain $b$.
\end{example}
\noindent In other words, while $\fc$ indicates how ``far apart''
solutions are, $\fcw$ indicates to what amount the solutions converge
due to navigation steps.

\shortversion{\begin{theorem}[$\star$]\label{thm:fcwzoomin}}
\longversion{\begin{theorem}\label{thm:fcwzoomin}}
    \longversion{The facet-counting weight~}\shortversion{Weight~}$\fcw$ is safe-zooming.
\end{theorem}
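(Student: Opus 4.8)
The plan is to unfold the definition of the facet-counting weight on the empty redirection and to show that activating any facet strictly reduces the number of facets. Since we evaluate $\fcw(f, \lp^{\delta}, \epsilon)$ with $\delta' = \epsilon$, the first case of the weight definition (which requires $\delta' \neq \epsilon$) cannot apply, so independently of whether $\route[\delta, f]$ is safe we land in the ``otherwise'' branch and obtain $\fcw(f, \lp^{\delta}, \epsilon) = |\F[\lp^{\delta}]| - |\F[\lp^{\route[\delta, f]}]|$. Hence it suffices to prove the strict inequality $|\F[\lp^{\route[\delta, f]}]| < |\F[\lp^{\delta}]|$ whenever $f \in \F[\lp^{\delta}]$ and $\delta \in \De{s}$.

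First I would reduce facet counts to inclusive-facet counts. By definition $\F = \FI \cup \FE$ with $\FE$ in bijection with $\FI$ via $\alpha \mapsto \overline{\alpha}$, so $|\F[\lp^{\delta}]| = 2\,|\FI[\lp^{\delta}]| = 2\,|\BC[\lp^{\delta}] \setminus \CC[\lp^{\delta}]|$, and likewise after the navigation step. Thus the claim becomes $|\FI[\lp^{\route[\delta, f]}]| < |\FI[\lp^{\delta}]|$.

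Next I would exploit monotonicity of consequences under shrinking the answer-set space. Because $f \in \F[\lp^{\delta}]$ and $\delta \in \De{s}$, the result of \citex{10.1007/978-3-319-99906-7_14} guarantees that $\lp^{\route[\delta, f]}$ is satisfiable and $\AS[\lp^{\route[\delta, f]}] = \{X \in \AS[\lp^{\delta}] \mid X \models f\}$ is a nonempty subset of $\AS[\lp^{\delta}]$. Taking the union and the intersection over a smaller nonempty family yields $\BC[\lp^{\route[\delta, f]}] \subseteq \BC[\lp^{\delta}]$ and $\CC[\lp^{\route[\delta, f]}] \supseteq \CC[\lp^{\delta}]$, whence $\FI[\lp^{\route[\delta, f]}] \subseteq \FI[\lp^{\delta}]$: no new inclusive facet can appear.

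The main obstacle, and the crux of the argument, is to upgrade this inclusion to a strict one by exhibiting a facet that is destroyed. Writing $f \in \{\alpha, \overline{\alpha}\}$, I would argue by the two cases of $\mathit{ic}(f)$. If $f = \alpha$, then every $X \in \AS[\lp^{\route[\delta, f]}]$ satisfies $\alpha \in X$, so $\alpha \in \CC[\lp^{\route[\delta, f]}]$ and therefore $\alpha \notin \FI[\lp^{\route[\delta, f]}]$; if $f = \overline{\alpha}$, then no such $X$ contains $\alpha$, so $\alpha \notin \BC[\lp^{\route[\delta, f]}]$ and again $\alpha \notin \FI[\lp^{\route[\delta, f]}]$. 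In either case $\alpha \in \FI[\lp^{\delta}]$ (since $f$ being a facet of $\lp^{\delta}$ means $\alpha$ is brave but not cautious there) while $\alpha \notin \FI[\lp^{\route[\delta, f]}]$, so the inclusion is strict. Combining, $|\F[\lp^{\route[\delta, f]}]| < |\F[\lp^{\delta}]|$ and thus $\fcw(f, \lp^{\delta}, \epsilon) > 0$, establishing that $\fcw$ is safe-zooming.
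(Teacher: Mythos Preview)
Your proof is correct and follows essentially the same route as the paper: unfold the weight at $\delta'=\epsilon$, then show that the atom $\alpha$ underlying $f$ becomes cautious (if $f=\alpha$) or non-brave (if $f=\overline{\alpha}$) in $\lp^{\route[\delta,f]}$, so it drops out of the facet set and the inclusion is strict. The only minor differences are cosmetic: you make the monotonicity step $\FI[\lp^{\route[\delta,f]}]\subseteq\FI[\lp^{\delta}]$ explicit via the union/intersection argument and detour through inclusive facets, whereas the paper states directly $\F[\lp^{\route[\delta,f]}]\subseteq\F[\lp^{\delta}]\setminus\{f\}$ without spelling out why no new facets appear.
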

\begin{proof}%
  Let $\lp$ be a program and $\delta \in \De{s}$. 
  By Definition~\ref{def:rw}:
  \begin{align}
    \fcw(f, \lp^{\delta}, \epsilon) & = |\F[\lp^{\delta}]| - |\F[\lp^{\route[\delta, f]}]| \label{al:1} 
  \end{align}
  Suppose $f \in \{\alpha, \overline{\alpha}\} \subseteq \F[\lp^{\delta}]$.
  Then we know that $\AS[\lp^{\route[\delta, f]}] =
  \{X \in \AS[\lp^{\delta}] \mid X
  \models f\}$, so that either $\forall X \in \AS[\lp^{\route[\delta,
  f]}]: \alpha \in X$, or $\forall X \in \AS[\lp^{\route[\delta, f]}]: \alpha
  \not \in X$.  Therefore either $\alpha \in \bigcap \AS[\lp^{\route[\delta,
  f]}] = \CC[\lp^{\route[\delta, f]}]$, or $\alpha \not \in \bigcup
  \AS[\lp^{\route[\delta, f]}] = \BC[\lp^{\route[\delta, f]}]$. Per definition of facets
  in both cases $\F[\lp^{\route[\delta, f]}]
  \subseteq \F[\lp^{\delta}] \setminus \{f\}$.  Therefore
  $|\F[\lp^{\route[\delta, f]}]| < |\F[\lp^{\delta}]|$.  Using (\ref{al:1})
  gives $\fcw(f, \lp^{\delta}, \epsilon) > 0$, which concludes the proof.
    
\end{proof}
\noindent Due to Theorem~\ref{thm:fcwzoomin}, we know that $\fc$ can be
used to determine the pace of safe navigation.
 In fact the facet-counting pace $\mathcal{P}_{\fc}$ emphasizes
    that $\fcw$ is not directly related to the size of the solution
    space. 
\begin{example}\label{exm:rpace}
Consider $\Pi_1$ from Example~\ref{ex:Pi1}.
   While $|\AS[\lp_1^{\route[\overline{c}]}]| = 2$ and
    $|\AS[\lp_1]| = 3$, which means that activating $\overline{c}$ on
    $\lp_1$ we lose 1 of 3 solutions so that
    $\pace{\abs}{\route[\overline{c}]} = \nicefrac{1}{3}$, we have
    $\pace{\fc}{\route[\overline{c}]} = \nicefrac{1}{4}$.
\end{example}

\noindent From Lemma~\ref{lem:ifff0}, we immediately conclude:
\begin{corollary}\label{cor:fcpace}
  $\mathcal{P}_{\fcw}(\delta) = 1$ if and only if
  $\delta \in \De{\mathit{ms}}$.  In contrast, for
  all~$\delta \in \De{s}$ we have 
  \shortversion{%
    $\pace{\abs}{\delta} \leq \frac{|\AS| - 1}{|\AS|}$.
  }
  \longversion{%
    \[\pace{\abs}{\delta} \leq \frac{|\AS| - 1}{|\AS|}.\]
  }
\end{corollary}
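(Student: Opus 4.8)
The plan is to unfold the facet-counting pace from its definition and reduce the equivalence to a statement about the number of remaining facets, which Lemma~\ref{lem:ifff0} then translates into a statement about answer sets. First I would apply Definition~\ref{def:pace} together with Definition~\ref{def:rw}: for $\delta \in \De{s}$ we have $\pace{\fc}{\delta} = \nicefrac{\fc(\lp) - \fc(\lp^{\delta})}{\fc(\lp)} = \nicefrac{|\F| - |\F[\lp^{\delta}]|}{|\F|}$. Since the pace is only defined when $|\AS| \geq 2$, Lemma~\ref{lem:ifff0} already gives $|\F| > 0$, so the denominator is nonzero and the quotient equals $1$ if and only if $|\F[\lp^{\delta}]| = 0$. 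Thus the whole corollary reduces to characterizing the safe routes $\delta$ for which no facet remains.

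Next I would feed this into Lemma~\ref{lem:ifff0} applied to the program $\lp^{\delta}$, obtaining $|\F[\lp^{\delta}]| = 0$ if and only if $|\AS[\lp^{\delta}]| \leq 1$. Because $\delta \in \De{s}$ guarantees $\AS[\lp^{\delta}] \neq \emptyset$, the two conditions combine to force $|\AS[\lp^{\delta}]| = 1$. It then remains to identify the safe routes that reach a single answer set with the maximal safe routes $\De{\mathit{ms}}$. The direction $\delta \in \De{\mathit{ms}} \Rightarrow |\AS[\lp^{\delta}]| = 1$ is exactly Lemma~\ref{lem:maxsafecar}. For the converse I would argue that once $\AS[\lp^{\delta}] = \{X\}$ we have $\BC[\lp^{\delta}] = \CC[\lp^{\delta}] = X$, hence $\F[\lp^{\delta}] = \emptyset$, so no facet can be activated on $\lp^{\delta}$ and the facet set $F$ underlying $\delta$ admits no safe proper extension, i.e.\ $F \in \DF$ and $\delta \in \De{\mathit{ms}}$.

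I expect this converse to be the main obstacle: turning ``$\delta$ reaches a unique answer set'' into ``$F$ is delimiting'' requires ruling out that some facet of $\lp$ left outside $F$ could still be appended to $\delta$ as a mere \emph{slide} while keeping the route safe. The honest crux is that one must leverage $\F[\lp^{\delta}] = \emptyset$ to show that every atom one might add is either a cautious consequence or a non--brave consequence of $\lp^{\delta}$, and argue that no such addition can enlarge a delimitation without producing an unsafe route; this is the one step that does not follow by pure definition-chasing and deserves care.

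Finally, for the contrasting bound on the absolute pace I would again just unfold definitions: for every $\delta \in \De{s}$, $\pace{\abs}{\delta} = \nicefrac{|\AS| - |\AS[\lp^{\delta}]|}{|\AS|}$, and safety yields $|\AS[\lp^{\delta}]| \geq 1$, whence $\pace{\abs}{\delta} \leq \nicefrac{|\AS| - 1}{|\AS|}$. This shows that the absolute pace can never attain $1$ on a safe route, which is precisely the point of the contrast: unlike $\fc$, the weighting function $\abs$ does not saturate when navigation converges to a unique solution.
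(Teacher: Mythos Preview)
Your unfolding of $\pace{\fc}{\delta}$ via Definitions~\ref{def:pace} and~\ref{def:rw} and the reduction to Lemma~\ref{lem:ifff0} is exactly how the paper proceeds (it presents the corollary as an immediate consequence of that lemma, with no further argument), and your treatment of the bound $\pace{\abs}{\delta}\le (|\AS|-1)/|\AS|$ is correct.

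The obstacle you single out in the converse direction is real, and the resolution you sketch does not close it. From $\F[\lp^{\delta}]=\emptyset$ you only learn that no facet \emph{of $\lp^{\delta}$} remains; this does not preclude appending some facet of the original program $\lp$ to $\delta$ as a slide. Concretely, take $\lp=\{a\,|\,b\}$ with $\AS=\{\{a\},\{b\}\}$ and $\F=\{a,b,\overline a,\overline b\}$, and let $\delta=\langle a\rangle$. Then $|\AS[\lp^{\delta}]|=1$, hence $|\F[\lp^{\delta}]|=0$ and $\pace{\fc}{\delta}=1$. But $\{a\}\notin\DF$: the strict superset $\{a,\overline b\}$ still yields a safe route, since the surviving answer set $\{a\}$ satisfies $\overline b$. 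Thus $\langle a\rangle\notin\De{\mathit{ms}}$, and the ``only if'' direction, read literally against Definitions~\ref{def:delimitation} and~\ref{def:maxsafe}, fails. What \emph{does} follow directly from Lemma~\ref{lem:ifff0}---and what the paper evidently intends---is $\pace{\fc}{\delta}=1$ iff $|\AS[\lp^{\delta}]|=1$, i.e., iff $\delta$ is equivalent to (can be extended to) some maximal safe route. So your instinct that this step ``deserves care'' is well placed; the honest fix is to read $\De{\mathit{ms}}$ up to route equivalence, or to restate the first claim as $\pace{\fc}{\delta}=1$ iff $|\AS[\lp^{\delta}]|=1$.
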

\noindent Corollary~\ref{cor:fcpace} states that, in contrast to
$\mathcal{P}_{\abs}$, the facet counting pace $\mathcal{P}_{\fcw}$
detects whether users sit on a unique solution. More importantly it
is the better option to find a viable implementation of the pace of
navigation for our framework. While in that sense using the relative
weight~$\fcw$ is beneficial, unfortunately it is not
\emph{min-inline}.
\begin{example}\label{exm:fcwnotmin}
    We consider $\lp_2 = \{a\,|\,b\,|\,c;\; d\,|\,e \leftarrow b;\; f \leftarrow
    c\}$ where $\AS[\lp_2] = \{\{a\}, \{b, d\}, \{b, e\}, \{c,
    f\}\}$. While $\overline{a} \in min_{\fcw}(\lp_2)$ and
    $\overline{c} \not \in min_{\fcw}(\lp_2)$, we have
    $|\AS[\lp_2^{\route[\overline{a}]}]| =
    |\AS[\lp_2^{\route[\overline{c}]}]|$. Hence, the relative weight~$\fcw$ is not
    min-inline.
\end{example}
\noindent We suspect that the property max-inline is not satisfied by the weight  $\fcw$ as we observed in our
experiments  that the activation of some facets, which had no maximal $\fcw$ weight,
lead to smaller answer set spaces than the activation of facets which had maximal $\fcw$ weight.
 An actual counterexample is still open.

\paragraph{Supported Model Counting.} Another approach to comparing sub-spaces
with respect to their size, while avoiding answer set counting, is to count supported
models. An interpretation $X$ is called \emph{supported
model}~\cite{apt1988towards,AlvianoD16} of $\lp$ if $X$ satisfies $\lp$ and for all $\alpha
\in X$ there is a rule $r \in \lp$ such that $H(r) \cap X = \{\alpha\}$, $B^+(r)
\subseteq X$ and $B^-(r) \cap X = \emptyset$. By $\mathcal{S}(\lp)$ we denote
the supported models of $\lp$. It holds that $\AS \subseteq
\mathcal{S}(\lp)$~\cite{marek1992relationship}, but the converse does not hold
in general.
We define \emph{supp weights}, by which in short we refer to supported model counting weights,
accordingly as follows.

\begin{definition}\label{def:sw}
    The supp weight $\smcw$ is defined by $\smc: \lp^{\delta} \mapsto |\mathcal{S}(\lp)|$.
\end{definition}
\noindent
The \emph{positive
dependency graph} of program $\lp$ is
$G(\lp) \coloneqq (\A, \{(\alpha_1, \alpha_0) \mid \alpha_1 \in B^+(r),
\alpha_0 \in H(r), r \in \lp\})$. $\lp$ is called \emph{tight}, if $G(\lp)$ is
acyclic. If $\lp$ is tight, then models of the completion and answer sets coincide~\cite{cois1994consistency}.
\longversion{%
  Since we have $\AS = \mathcal{S}(\lp)$ for tight programs $\lp$, we can immediately 
  obtain the following corollary.
}
\begin{corollary}
  If $\lp$ is tight, then for all $f \in \F[\lp^{\delta}]$ we have
  that
  $\absw(f, \lp^{\delta}, \delta') = \smcw(f, \lp^{\delta}, \delta')$.
\end{corollary}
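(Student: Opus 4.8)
The plan is to reduce the statement to one structural fact, namely that tightness is preserved along every route, after which the equality of the two weights is pure substitution. First I would note that a navigation step adds only integrity constraints $\mathit{ic}(f_i)$, that is, rules $r$ with $H(r) = \emptyset$. Since the positive dependency graph records edges $(\alpha_1, \alpha_0)$ only when $\alpha_0 \in H(r)$, a rule with empty head contributes no edges, and the atoms occurring in $\mathit{ic}(f_i)$ are facets of $\lp$ and hence already lie in $\A$. Therefore $G(\lp^{\delta}) = G(\lp)$ for every $\delta \in \De{}$, and as $\lp$ is tight this graph is acyclic; consequently $\lp^{\delta}$ is tight as well.

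Next I would invoke the recalled fact that answer sets and supported models coincide on tight programs. Applied to $\lp^{\delta}$ this gives $\abs(\lp^{\delta}) = |\AS[\lp^{\delta}]| = |\mathcal{S}(\lp^{\delta})| = \smc(\lp^{\delta})$, and the identical reasoning applies to $\lp^{\delta'}$ and $\lp^{\route[\delta, f]}$, because $\delta'$ and $\route[\delta, f]$ are themselves routes over $\lp$, so the argument of the first paragraph carries over verbatim. Hence the two weighting functions $\abs$ and $\smc$ agree on every program that can appear in either weight expression.

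Finally I would unfold the definition of $\omega_{\#}$ for $\# \in \{\abs, \smc\}$. The case distinction depends only on whether $\route[\delta, f] \not\in \De[\lp^{\delta}]{s}$ and whether $\delta' \neq \epsilon$; since neither condition refers to the weighting function, $\absw$ and $\smcw$ select the same branch, and in that branch each is a difference of two values of its weighting function at programs of the form $\lp^{\sigma}$. By the previous paragraph those values coincide termwise, so the two differences are equal, yielding $\absw(f, \lp^{\delta}, \delta') = \smcw(f, \lp^{\delta}, \delta')$. The only genuine content lies in the tightness-preservation step --- specifically in verifying that $\mathit{ic}(f)$ introduces neither new atoms nor new positive-body-to-head edges, which is exactly where the empty-head property is used; the matching of the two case conditions, and hence the fact that the proof need not branch, then follows immediately since both conditions are phrased solely in terms of safe routes and the redirection $\delta'$.
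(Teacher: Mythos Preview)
Your argument is correct and follows the same route the paper intends: the paper states the corollary as an immediate consequence of the fact that $\AS = \mathcal{S}(\lp)$ for tight programs and gives no explicit proof. You supply exactly the detail the paper suppresses, namely that tightness is preserved along routes because $\mathit{ic}(f)$ has empty head and therefore contributes no edges to the positive dependency graph; after that, the termwise equality of $\abs$ and $\smc$ on all relevant programs makes the two weights coincide by definition.
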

\noindent Due to the fact that unsatisfiable programs may have supported
models~\cite{marek1992relationship}, $\smcw$ is not
reliable. Moreover the following example shows that $\smcw$ is neither min-inline, nor max-inline.
\begin{example}\label{ex:Pi3}
We consider $\Pi_3=\{a;\; b\leftarrow a, \simnot c;\; c\leftarrow\simnot b, \sim d;\;
d\leftarrow d\}$ with $\mathcal{S}(\Pi_3)=\{\{a,b\}, \{a,c\}, \{a,b,d\}\}$ and
$\AS[\Pi_3]=\{\{a,b\},\{a,c\}\}$. The facets of $\Pi_3$ are given by
$\mathcal{F}(\Pi_3)=\{b,\overline{b}, c, \overline{c}\}$. Then, the facets $b$ and
$\overline{c}$ both have supp weight 1 and thus are minimal weighted, and the
facets $c$ and $\overline{b}$ have supp weight 2 and thus are maximal weighted. 
As $|\AS[\Pi_3^{\route[b]}]|=|\AS[\Pi_3^{\route[c]}]|=1$ we see that both the
minimal and the maximal weighted facets with respect to supp weights have the same number
of answer sets. Hence, $\smcw$ is neither min-inline, nor max-inline.
\end{example}
\noindent
Although  $\smcw$ does not satisfy min-inline and max-inline, 
it shares some properties with $\absw$ and $\fcw$.
\shortversion{\begin{lemma}[$\star$]\label{lem:nonewsms}}
\longversion{\begin{lemma}\label{lem:nonewsms}}
	\longversion{Let $\lp$ be a program and $\delta \in \De{s}$. If}%
        \shortversion{For program~$\lp$ and $\delta \in \De{s}$, if}
        $f \in
        \F[\lp^{\delta}]$, then 
        \shortversion{%
          $\mathcal{S}(\lp^{\route[\delta, f]})
          = \{X \in \mathcal{S}(\lp^{\delta}) \mid X \models f\}\subset
          \mathcal{S}(\lp^\delta)$.
        }
        \longversion{%
          \[\mathcal{S}(\lp^{\route[\delta, f]})
          = \{X \in \mathcal{S}(\lp^{\delta}) \mid X \models f\}\subset
          \mathcal{S}(\lp^\delta).\]
        }
\end{lemma}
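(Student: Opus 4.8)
The plan is to prove the two assertions of the statement separately, both resting on a single structural observation: the step from $\lp^{\delta}$ to $\lp^{\route[\delta, f]}$ merely adds the integrity constraint $\mathit{ic}(f)$, whose head is empty. First I would record how $\mathit{ic}(f)$ acts on satisfaction. An interpretation $X$ satisfies $\mathit{ic}(f)$ precisely when $X \models f$: for $f = \alpha$ the constraint $\leftarrow \simnot \alpha$ is satisfied iff $\alpha \in X$, and for $f = \overline{\alpha}$ the constraint $\leftarrow \alpha$ is satisfied iff $\alpha \notin X$. Since $\lp^{\route[\delta, f]} = \lp^{\delta} \cup \mathit{ic}(f)$, it follows that $X$ satisfies $\lp^{\route[\delta, f]}$ if and only if $X$ satisfies $\lp^{\delta}$ and $X \models f$.

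Next I would treat the support condition, which is the conceptual crux. Because $\mathit{ic}(f)$ has empty head, it can never be the rule $r$ witnessing the support of an atom $\alpha \in X$, as such a rule requires $H(r) \cap X = \{\alpha\}$ whereas here $H(r) = \emptyset$. Hence an atom of $X$ is supported in $\lp^{\route[\delta, f]}$ exactly when it is already supported by some rule of $\lp^{\delta}$. Combining this with the satisfaction equivalence above yields $X \in \mathcal{S}(\lp^{\route[\delta, f]})$ if and only if $X \in \mathcal{S}(\lp^{\delta})$ and $X \models f$, which is precisely the claimed equality $\mathcal{S}(\lp^{\route[\delta, f]}) = \{X \in \mathcal{S}(\lp^{\delta}) \mid X \models f\}$.

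For the strict containment it remains to exhibit a supported model of $\lp^{\delta}$ that is excluded on the right. Since $f \in \F[\lp^{\delta}]$, the underlying atom $\alpha$ satisfies $\alpha \in \BC[\lp^{\delta}] \setminus \CC[\lp^{\delta}]$, so $\lp^{\delta}$ has an answer set disagreeing with $f$: if $f = \alpha$, choose $Y \in \AS[\lp^{\delta}]$ with $\alpha \notin Y$ (available because $\alpha \notin \CC[\lp^{\delta}]$); if $f = \overline{\alpha}$, choose $Y$ with $\alpha \in Y$ (available because $\alpha \in \BC[\lp^{\delta}]$). Using $\AS[\lp^{\delta}] \subseteq \mathcal{S}(\lp^{\delta})$, this $Y$ is a supported model of $\lp^{\delta}$ with $Y \not\models f$, witnessing $\{X \in \mathcal{S}(\lp^{\delta}) \mid X \models f\} \subsetneq \mathcal{S}(\lp^{\delta})$.

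I expect the only genuinely delicate point to be the support step: one must be sure that adjoining a headless rule neither creates nor destroys the supportedness of any atom, so that supported models behave here exactly like classical models under the added constraint. This is immediate from the definition, but it is the property that distinguishes supported models from merely satisfying interpretations, so I would state it explicitly rather than fold it silently into the satisfaction argument.
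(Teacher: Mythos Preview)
Your proof is correct and a bit more direct than the paper's. The paper does not work with the headless integrity constraint $\mathit{ic}(f)$ as such; instead it rewrites the constraint as a self-blocking rule $\alpha' \leftarrow \alpha, \simnot \alpha'$ (respectively $\alpha' \leftarrow \simnot\alpha, \simnot \alpha'$) over a fresh atom $\alpha'$, and then argues from the definition of supported models that this rule filters out exactly the interpretations violating $f$. Your route---observing that a rule with empty head can never witness the support of any atom, so that adjoining $\mathit{ic}(f)$ to $\lp^{\delta}$ affects only the satisfaction condition and not the support condition---achieves the same conclusion without the detour through the auxiliary atom, and stays entirely within the paper's own definitions. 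For the strict inclusion, the paper is essentially silent (it asserts the $\subset$ in the final line without isolating a witness), whereas you explicitly extract an answer set $Y \in \AS[\lp^{\delta}]$ with $Y \not\models f$ from the facet condition $\alpha \in \BC[\lp^{\delta}] \setminus \CC[\lp^{\delta}]$ and invoke $\AS[\lp^{\delta}] \subseteq \mathcal{S}(\lp^{\delta})$; this is the right way to complete the argument.
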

\begin{proof}%
  Let $\lp$ be a program and $\delta \in \De{s}$. Suppose $f \in \{\alpha,
  \overline{\alpha}\} \subseteq \F[\lp^{\delta}]$. Then, we know that
  $\AS[\lp^{\route[\delta, f]}] \neq \emptyset$, so that, using the fact that
  $\AS \subseteq \mathcal{S}(\lp)$, we conclude that
  $\mathcal{S}(\lp^{\route[\delta, f]}) \neq \emptyset$, It is well known that
  an integrity constraint $\leftarrow \alpha$ can be encoded as a self-blocking
  rule $\alpha' \leftarrow \alpha, \simnot \alpha'$ where $\alpha'$ is a new
  introduced atom, so that $ic(f)$ can be encoded as $\alpha' \leftarrow \alpha,
  \simnot \alpha'$ ($\alpha' \leftarrow \simnot \alpha, \simnot \alpha'$
  respectively).  Hence, by definition of $\mathcal{S}(\lp)$, it is easy to see that
  activating $f = \alpha$ rejects any interpretation $X \in
  \mathcal{S}(\lp^{\delta})$ that contains $\alpha$. Analogously, if $f =
  \overline{\alpha}$ any interpretation that does not contain $\alpha$ is being
  rejected. Therefore we conclude that $\mathcal{S}(\lp^{\route[\delta, f]}) =
  \{X \in \mathcal{S}(\lp^{\delta}) \mid X \models f\}\subset
  \mathcal{S}(\lp^\delta)$.    

\end{proof}
\shortversion{\begin{theorem}[$\star$]\label{thm:smcwmininline}}
\longversion{\begin{theorem}\label{thm:smcwmininline}}
  \longversion{The supp weight~}\shortversion{Weight~}$\smcw$ is safe-zooming and splitting.
\end{theorem}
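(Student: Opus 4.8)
The plan is to follow the same template as the proof of Theorem~\ref{thm:awsplrel} for the absolute weight, replacing the role of answer sets with supported models and using Lemma~\ref{lem:nonewsms} as the structural workhorse. That lemma already tells us that activating a facet $f \in \F[\lp^{\delta}]$ on a safe route $\delta$ restricts the supported models exactly to those satisfying $f$, and that this restriction is \emph{proper}; this is precisely the supported-model analogue of the partition fact~(\ref{al:splrel1}) used for $\absw$, so both properties should reduce to cardinality bookkeeping.

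For safe-zooming I would fix $\delta \in \De{s}$ and $f \in \F[\lp^{\delta}]$ and observe that, since the redirection argument is $\epsilon$, the conjunction in the first branch of the weight definition is false, so the ``otherwise'' branch applies, giving $\smcw(f, \lp^{\delta}, \epsilon) = |\mathcal{S}(\lp^{\delta})| - |\mathcal{S}(\lp^{\route[\delta, f]})|$. Lemma~\ref{lem:nonewsms} yields $\mathcal{S}(\lp^{\route[\delta, f]}) \subsetneq \mathcal{S}(\lp^{\delta})$, so the cardinality strictly drops and the weight is positive, which is exactly the safe-zooming condition.

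For splitting I would take $\delta, \delta' \in \De{s}$ and $\alpha, \overline{\alpha} \in \F[\lp^{\delta}]$. First I would argue, exactly as in the $\absw$ proof, that because $\alpha$ and $\overline{\alpha}$ are both facets of $\lp^{\delta}$ the routes $\route[\delta, \alpha]$ and $\route[\delta, \overline{\alpha}]$ are safe, so in both weight computations the ``otherwise'' branch fires and $\delta'$ is irrelevant. Then Lemma~\ref{lem:nonewsms} gives $\mathcal{S}(\lp^{\route[\delta, \alpha]}) = \{X \in \mathcal{S}(\lp^{\delta}) \mid X \models \alpha\}$ together with the analogous identity for $\overline{\alpha}$. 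Since every supported model of $\lp^{\delta}$ either contains $\alpha$ or does not, these two sets partition $\mathcal{S}(\lp^{\delta})$, so their cardinalities sum to $|\mathcal{S}(\lp^{\delta})|$. Substituting $\smcw(\alpha, \lp^{\delta}, \delta') = |\mathcal{S}(\lp^{\delta})| - |\mathcal{S}(\lp^{\route[\delta, \alpha]})|$ and the companion expression for $\overline{\alpha}$, the two copies of $|\mathcal{S}(\lp^{\delta})|$ combine and the subtracted terms recombine into the whole set, leaving $\smc(\lp^{\delta})$, which is the splitting identity.

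The argument is essentially routine once Lemma~\ref{lem:nonewsms} is in hand; the only points requiring care are the two branch-selection facts in the weight definition — that $\delta' = \epsilon$ forces the second branch (for safe-zooming), and that facethood of $\alpha$ and $\overline{\alpha}$ forces safety of the extended routes and hence again the ``otherwise'' branch (for splitting) — together with the observation that satisfying $\alpha$ versus $\overline{\alpha}$ genuinely \emph{partitions} the supported models rather than merely covering them. I expect no real obstacle beyond being careful that Lemma~\ref{lem:nonewsms} is invoked only on safe routes, which is guaranteed since $\delta \in \De{s}$ and the facet condition $\alpha,\overline{\alpha}\in\F[\lp^{\delta}]$ keeps the one-step extensions satisfiable.
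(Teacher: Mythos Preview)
Your proposal is correct and follows essentially the same approach as the paper: for safe-zooming the paper simply says it ``follows directly from Lemma~\ref{lem:nonewsms}'', and for splitting it observes that Lemma~\ref{lem:nonewsms} yields a partition of $\mathcal{S}(\lp^{\delta})$ into $\mathcal{S}(\lp^{\route[\delta, \alpha]})$ and $\mathcal{S}(\lp^{\route[\delta, \overline{\alpha}]})$ and then appeals to the same cardinality arithmetic as in the proof of Theorem~\ref{thm:awsplrel}. Your write-up is in fact more explicit about the branch-selection details in the weight definition than the paper's own terse argument.
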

\begin{proof}%
  Let $\lp$ be a program, $\delta \in \De{s}$ and $f \in
  \F[\lp^{\delta}]$.
  \paragraph{safe-zooming:} Follows directly from Lemma~\ref{lem:nonewsms}.
  \paragraph{splitting:} Suppose $f \in \{\alpha, \overline{\alpha}\}$.
  Due to Lemma~\ref{lem:nonewsms} it is easy to see that
  $\mathcal{S}(\lp^{\route[\delta, \alpha]})$ and
  $\mathcal{S}(\lp^{\route[\delta, \overline{\alpha}]})$ form a
  partition of $\mathcal{S}(\lp^{\delta})$, from which, analogously to
  the proof for the splitting property of $\absw$, it follows that
  $\smcw$ is splitting.

\end{proof}
\noindent Computing supp weights is computationally easier. 
\shortversion{\begin{lemma}[$\star$]\label{lem:compl:smcw}}
\longversion{\begin{lemma}\label{lem:compl:smcw}}
  Outputting the supp weight~$\smcw$ for a given program~$\lp$ and
  route~$\delta$ is $\numberP$-complete.
\end{lemma}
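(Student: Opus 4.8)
The plan is to mirror the treatment of the absolute-weight bound and reduce the complexity of outputting $\smcw$ to that of the underlying counting problem, namely computing the value $\smc(\lp^{\delta}) = |\mathcal{S}(\lp^{\delta})|$. Concretely, I would show that counting supported models of a program is \numberP-complete, and then argue that outputting $\smcw$ inherits exactly this complexity, since by definition $\smcw(f, \lp^{\delta}, \delta')$ is a difference of two such counts (with the case distinction only deciding which second count is subtracted).

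For \emph{membership}, I would observe that the witness relation ``$X$ is a supported model of $\lp^{\delta}$'' is decidable in polynomial time: given $X \subseteq \A[\lp^{\delta}]$, one checks that $X$ satisfies every rule and that each $\alpha \in X$ is supported, i.e.\ that some rule $r$ has $H(r) \cap X = \{\alpha\}$, $B^+(r) \subseteq X$ and $B^-(r) \cap X = \emptyset$. Since candidate interpretations have size polynomial in $|\lp^{\delta}|$, this places $\smc$ in \numberP. Outputting the weight then reduces to computing $\smc(\lp^{\delta})$ together with $\smc(\lp^{\route[\delta, f]})$ (or $\smc(\lp^{\delta'})$) and returning their difference, so the dominant computational content stays at the level of supported-model counting. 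This membership argument is insensitive to disjunction, so it holds for the disjunctive programs considered here.

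For \emph{hardness}, I would give a parsimonious reduction from \#SAT. Given a CNF formula $\varphi$ over $x_1, \dots, x_n$, I build a normal program $\lp_\varphi$ consisting of a choice gadget $x_i \leftarrow \simnot \bar{x}_i$ and $\bar{x}_i \leftarrow \simnot x_i$ for each variable, plus one integrity constraint per clause that forbids precisely the assignments falsifying that clause (for a clause $C$, the constraint body collects $\bar{x}_i$ for each positive literal $x_i \in C$ and $x_j$ for each negative literal $\simnot x_j \in C$). A short case analysis shows that, restricted to $\{x_i, \bar{x}_i\}$, the gadget admits exactly the two supported subsets $\{x_i\}$ and $\{\bar{x}_i\}$ (the empty choice violates a rule, and taking both atoms leaves each unsupported), while the constraints---encoded as self-blocking rules $\alpha' \leftarrow \mathit{body}, \simnot \alpha'$---delete exactly the interpretations violating some clause and never force an auxiliary atom into a supported model. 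Hence the supported models of $\lp_\varphi$ are in bijection with the satisfying assignments of $\varphi$, giving $\smc(\lp_\varphi) = \#\mathrm{models}(\varphi)$; transferring this to the weight as in the absolute-weight lemma yields \numberP-hardness.

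The step I expect to be the main obstacle is \emph{verifying the bijection} in the hardness direction: one must rule out spurious supported models stemming from the auxiliary self-blocking atoms and confirm that the choice gadget admits neither the empty assignment nor both $x_i$ and $\bar{x}_i$ at once, so that every supported model projects to a genuine total assignment satisfying all clauses. A secondary point, handled exactly as for $\absw$, is reconciling the case distinction in the definition of $\smcw$---and the fact that the weight may be negative (a zoom-out)---with the claimed \numberP upper bound; this is resolved by observing that the defining computational content is supported-model counting, so the reduction can always be arranged to isolate a single count.
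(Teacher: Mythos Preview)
Your proposal is correct and follows essentially the same route as the paper: both directions go through \#SAT, with hardness via the standard SAT-to-ASP encoding (the paper just cites Niemel\"a's translation, which is precisely your choice-gadget-plus-constraint construction) and membership via the polynomial-time checkability of ``$X$ is a supported model.'' The only cosmetic difference is that the paper phrases membership through Clark's completion (supported models coincide with models of the completion, so one inherits the \numberP bound from propositional model counting), whereas you argue the witness check directly; these are equivalent and equally standard.
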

\begin{proof}%
  Since we can easily compute $\smcw$ using Clark's
  completion~\cite{clark1978negation} and propositional model
  counting~\cite{Valiant79b} and vice-versa encode a SAT instance into
  a logic program while preserving the models~\cite{Niemela99}, we
  obtain membership and hardness.

\end{proof}
\noindent However, recalling Lemma~\ref{lem:compl:fcw}, note that
counting facets is still the least expensive method.

\begin{table}[h] 
\centering
    \begin{tabular}{c|c|c|c|c|c}
                                                            & \it
        ~\texttt{saf}~ & \it ~\texttt{rel}~ & \it ~\texttt{spl}~ &
        ~\texttt{min}~ & ~\texttt{max} \\ 
      \toprule %
      $\absw$ & \ding{51} &
        \ding{51} & \ding{51} & \ding{51} & \ding{51} \\ $\fcw$ &
        \ding{51} & \ding{55} & \ding{55} & \ding{55} & ?\\ $\smcw$ & %
        \ding{51} & \ding{55} & \ding{51} & \ding{55} & \ding{55} \\
    \end{tabular}
\caption{Comparing weights regarding \texttt{saf}: is safe-zooming, \texttt{spl}: is splitting,
\texttt{rel}: is reliable, \texttt{min}: is
min-inline and \texttt{max}: is max-inline.}
\label{tab:wc}
\end{table}

In summary, 
we can characterize and compare the introduced weights as given in Table~\ref{tab:wc}.
Every weight has its advantages that should be
used to leverage performance, or characterize the solution space and
its sub-spaces. While counting solutions is the most desirable choice,
computing $\absw$ is hard. Our results show that, when 
narrowing down the space by strictly pruning the maximum/minimum number
of solutions, at least for tight programs, $\smcw$ is the best choice,
as it coincides with $\absw$ while remaining less expensive. In
general, in contrast to $\absw$, relative
weights come with different use cases regarding their interpretation.
Even though $\fcw$ has deficiencies, it satisfies the most essential
property, namely being safe-zooming, and
provides information on the
similarity/diversity of solutions w.r.t. a route. To conclude, while
facet-counting is the most promising method for distinguishing 
zoom-in effects of facets regarding computational feasibility,
counting supported models of tight programs is precise about
zoom-in effects. %

\subsection{Weighted Navigation Modes}
\label{sec:wnm}
In the following, we introduce two new navigation modes, called
\emph{strictly goal-oriented} and \emph{explore}. They can be understood as
special cases of goal-oriented navigation. 
\begin{definition}
	Let $\lp$ be a program, $\delta \in \De{s}$ and $f \in
        \F$. The \emph{strictly goal-oriented} mode $\vsgo{\#}$ and
        the \emph{explore} $\vexpl{\#}$ mode are defined by:
	\[
		\vsgo{\#}{(\delta, f)} \coloneqq
		\begin{cases} 
			\AS[\lp^{\route[\delta, f]}], & \text{ if } f
                        \in max_{\omega_{\#}}(\lp^{\delta});
                        \\ \AS[\lp^{\delta}] & \text{otherwise.}
		\end{cases}\]
    \[
      \vexpl{\#}{(\delta, f)} \coloneqq
	  \begin{cases} 
		\AS[\lp^{\route[\delta, f]}], & \text{ if } f \in
                min_{\omega_{\#}}(\lp^{\delta}); \\ \AS[\lp^{\delta}] &
                \text{otherwise.}
	\end{cases}
    \]
\end{definition}
\begin{corollary}
  $\vsgo{\#}$ and $\vexpl{\#}$ avoid unsafe routes, hence we can use
  the restriction $\omega_{\#}\restrict{X}$ of $\omega_{\#}$ where $X
  \coloneqq \{(f, \delta, \epsilon) \mid f \in \F, \delta \in \De{s}\}$.
\end{corollary}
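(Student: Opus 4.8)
The plan is to show that whenever either mode takes the first branch of its case distinction—moving from $\lp^{\delta}$ to $\lp^{\route[\delta, f]}$—the route $\route[\delta, f]$ is safe, while in the fallback branch the route is left at the (safe) input $\delta$. First I would confirm from the definition of the weighted-facet sets that both $max_{\omega_{\#}}(\lp^{\delta})$ and $min_{\omega_{\#}}(\lp^{\delta})$ consist only of facets of the \emph{current} program: the candidate facet is drawn from $\F[\lp^{\delta}]$, so $max_{\omega_{\#}}(\lp^{\delta}) \subseteq \F[\lp^{\delta}]$ and $min_{\omega_{\#}}(\lp^{\delta}) \subseteq \F[\lp^{\delta}]$. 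Hence the first branch of $\vsgo{\#}$ forces $f \in \F[\lp^{\delta}]$, and likewise the first branch of $\vexpl{\#}$ forces $f \in \F[\lp^{\delta}]$.

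Next I would invoke the faceted-navigation guarantee of \citex{10.1007/978-3-319-99906-7_14} recalled earlier: if $f \in \F[\lp^{\delta}]$, then $\lp^{\route[\delta, f]} = \lp^{\delta} \cup ic(f)$ is satisfiable and $\AS[\lp^{\route[\delta, f]}] = \{X \in \AS[\lp^{\delta}] \mid X \models f\} \neq \emptyset$, so $\route[\delta, f] \in \De{s}$. In the fallback branch both modes return $\AS[\lp^{\delta}]$, i.e.\ remain on $\delta$, which is safe by the standing hypothesis $\delta \in \De{s}$. Since the input route is always safe and each step preserves safety, no unsafe route is ever produced, which is the first assertion of the corollary.

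For the restriction claim I would observe that the only weight values either mode ever consults are those occurring in the membership tests $f \in max_{\omega_{\#}}(\lp^{\delta})$ and $f \in min_{\omega_{\#}}(\lp^{\delta})$. By definition these evaluate $\omega_{\#}$ with third argument fixed to $\epsilon$, on a route $\delta \in \De{s}$, and with first argument ranging over $\F[\lp^{\delta}] \subseteq \F$ (the inclusion holds because $\BC[\lp^{\delta}] \subseteq \BC$ and $\CC \subseteq \CC[\lp^{\delta}]$). Every such triple lies in $X = \{(f, \delta, \epsilon) \mid f \in \F, \delta \in \De{s}\}$, so substituting $\omega_{\#}\restrict{X}$ for $\omega_{\#}$ does not alter the behaviour of either mode.

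The step I expect to demand the most care is the first: establishing that $max_{\omega_{\#}}(\lp^{\delta})$ and $min_{\omega_{\#}}(\lp^{\delta})$ range over $\F[\lp^{\delta}]$ rather than over all of $\F$. This is precisely what excludes the degenerate case of selecting a facet $f \in \F \setminus \F[\lp^{\delta}]$ whose activation renders $\lp^{\delta}$ unsatisfiable—such a facet would, for a reliable weight like $\absw$ (Theorem~\ref{thm:awsplrel}), carry the maximal weight $\#(\lp^{\delta})$ and would otherwise be chosen by $\vsgo{\#}$, breaking safety. Once the candidate facets are confined to $\F[\lp^{\delta}]$, safety is immediate and the remainder is routine.
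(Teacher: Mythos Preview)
The paper states this corollary without proof, treating it as immediate from the definitions of $\vsgo{\#}$, $\vexpl{\#}$, and the max/min-weighted facet sets. Your argument spells out precisely the intended reasoning: both modes, in their active branch, select $f$ from $max_{\omega_{\#}}(\lp^{\delta})$ or $min_{\omega_{\#}}(\lp^{\delta})$, which by definition range over $\F[\lp^{\delta}]$; the faceted-navigation guarantee then yields $\route[\delta,f]\in\De{s}$, while the fallback branch stays on the safe input $\delta$. Your observation that the restriction to $\F[\lp^{\delta}]$ is the crux---and that without it a reliable weight could select an unsafe facet---is exactly right and is the only non-trivial point; the paper simply leaves it implicit.
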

\noindent While in strictly goal-oriented mode the objective is to
``rush'' through the solution space, navigating at the highest
possible pace in order to reach a unique solution as quick as
possible, explore mode keeps the user off one unique solution as long
as possible, aiming to provide her with as many solutions as possible
to explore while ``strolling'' between sub-spaces. As a consequence,
regardless of whether absolute or relative weights are used, during
weighted navigation some (partial) solutions may be unreachable.

\begin{example}\label{exm:unrsol}
  Consider $\lp_2$ from Example~\ref{exm:fcwnotmin} where we can choose from
  facets $\F[\lp_2] = \{ a,$ b, c, d, e, f, $\overline{a}$, $\overline{b}$,
  $\overline{c}$, $\overline{d}$, $\overline{e}$, $\overline{f}\}$ and
  $max_{\absw}(\lp_2) = \{a, c, d, e, f\} = max_{\fcw}(\lp_2)$.  Thus, any
  solution $X \in \AS[\lp_2] = \{\{a\}, \{b, d\}, \{b, e\}, \{c, f\}\}$ such that  $b
  \in X$ is unreachable in $\vsgo{\abs}$ and $\vsgo{\fc}$.  \longversion{Accordingly, since}\shortversion{Since }%
  $\absw$ is splitting, it follows that $min_{\abs}(\lp_2) = \{\overline{a},
  \overline{c}, \overline{d}, \overline{e}, \overline{f}\}$. Hence, navigating
  in $\vexpl{\abs}$, one has to sacrifice either partial solution $a$, or $c$
  and $f$ right in the beginning. Furthermore, since $min_{\fcw}(\lp_2) =
  \{\overline{a}, \overline{d}, \overline{e}\}$, right in the beginning of navigating in $\vexpl{\fc}$, 
  one has to sacrifice partial solution $a$, $d$, or $e$.
\end{example}
\section{Implementation and Evaluation}
\begin{figure*}[t]
  \centering
  \begin{subfigure}{0.32\linewidth}
    \centering
    \includegraphics[width=\linewidth]{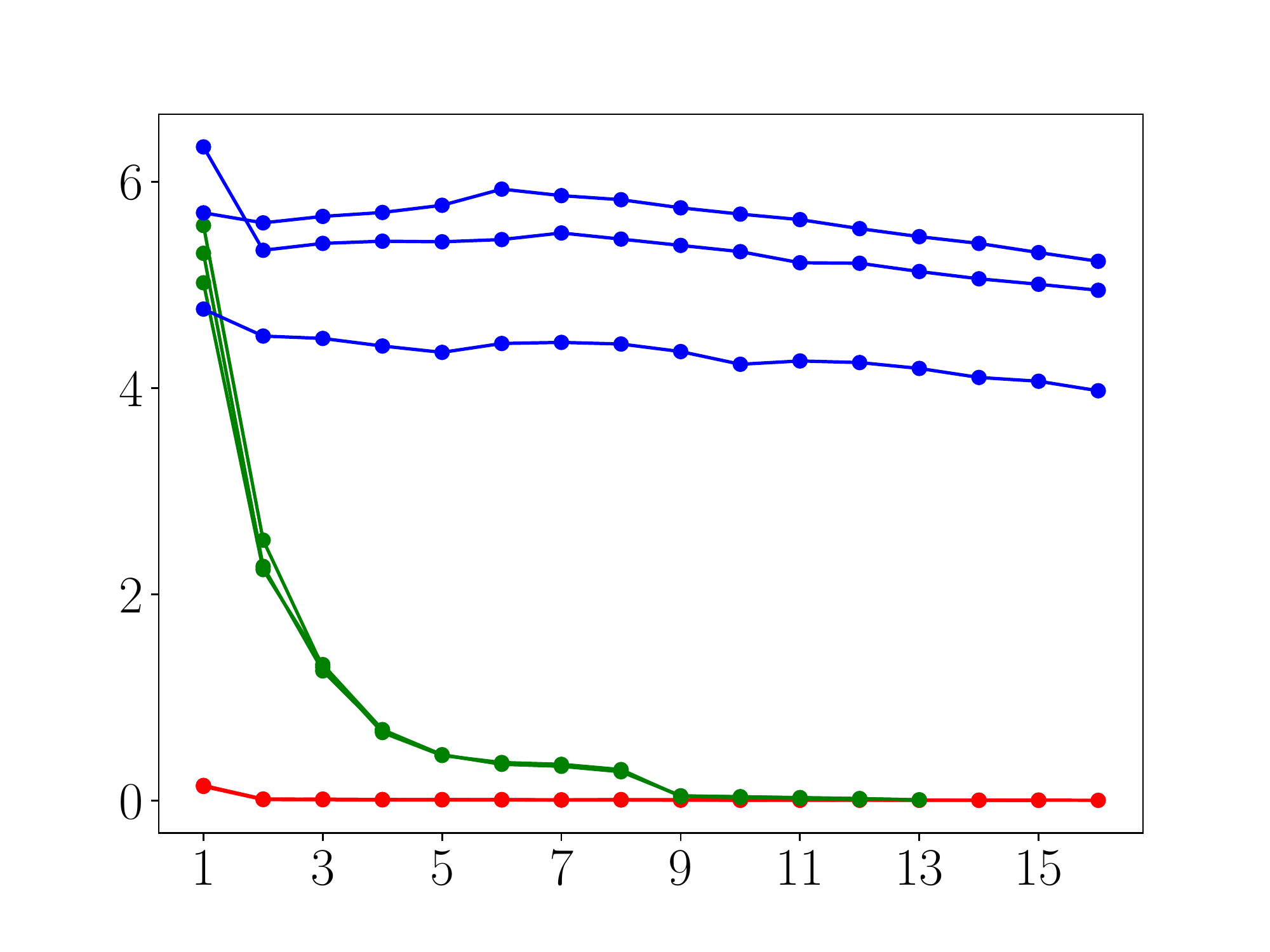}
    \caption{PC configuration.}
    \label{subfig:pcc}
  \end{subfigure}
  \begin{subfigure}{0.32\linewidth}
    \centering 
    \includegraphics[width=\linewidth]{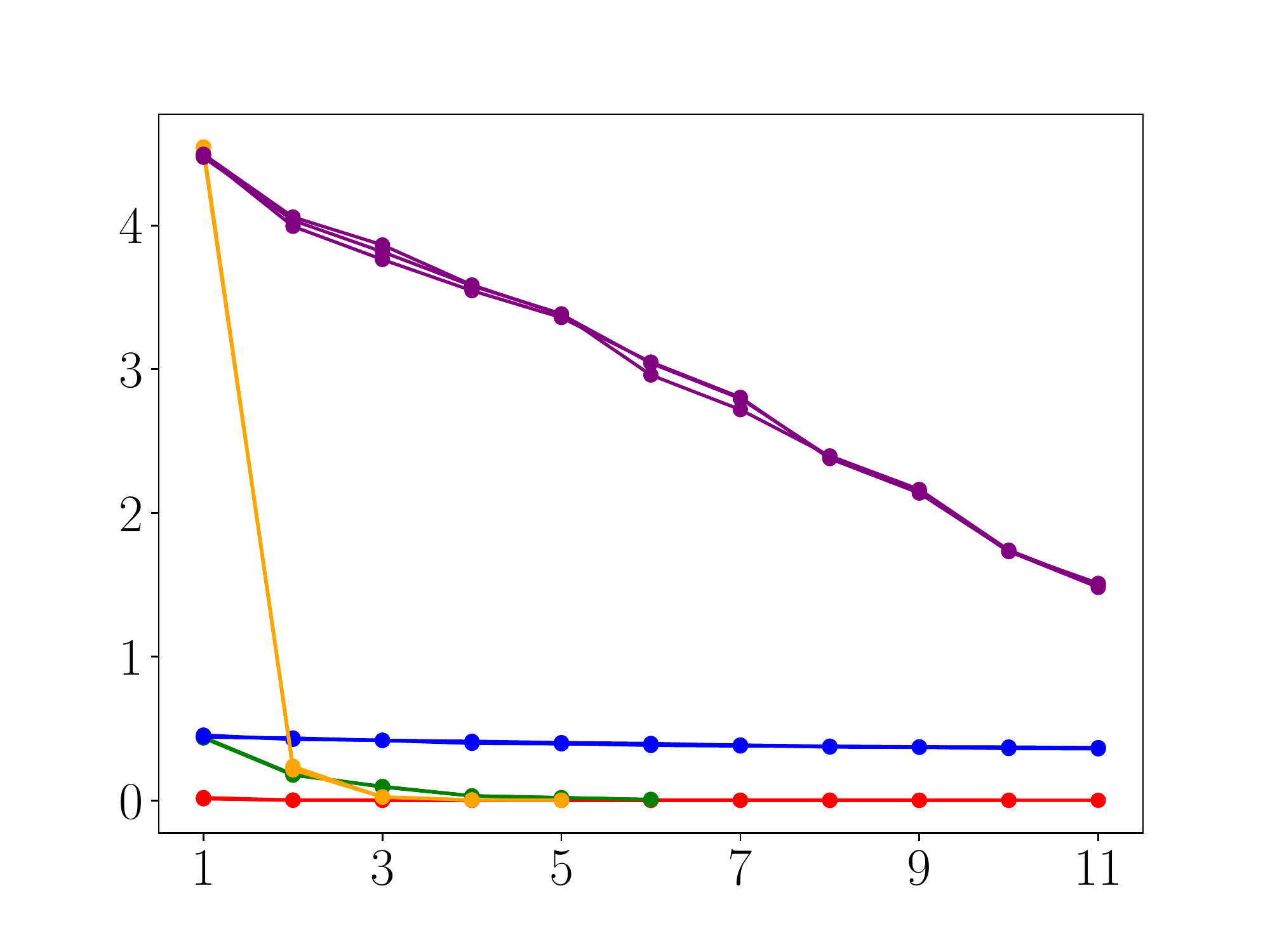}
    \caption{Stable extensions.}
    \label{subfig:afst}
  \end{subfigure}
  \begin{subfigure}{0.32\linewidth}
    \centering 
    \includegraphics[width=\linewidth]{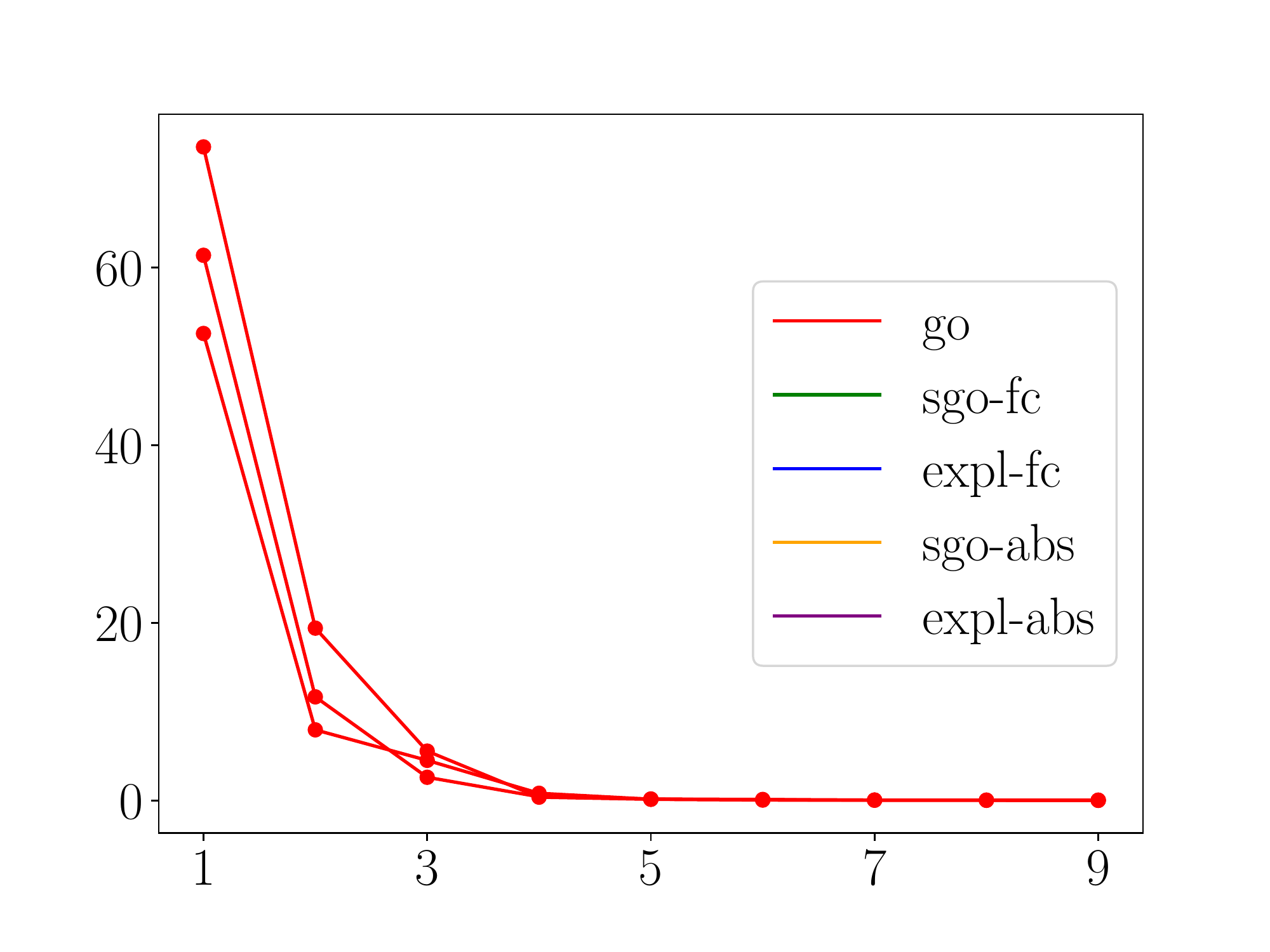}
    \caption{Preferred extensions.}
    \label{subfig:afpr}
  \end{subfigure}
  \caption{Comparing random steps in several navigation modes. %
    The x-axis refers to the respective navigation step, the y-axis refers to the execution
    time  in seconds. Colors in Figure~\ref{subfig:pcc} and~\ref{subfig:afst} follow the legend as given in
    Figure~\ref{subfig:afpr}.
  }
\end{figure*}
To study the feasibility of our framework, we implemented the \emph{faceted
answer set browser} (\verb"fasb") %
on top of the \verb"clingo" solver. In particular, we conducted
experiments on three instance sets 
that range from large solution spaces to complex encodings in order to verify the
following two hypotheses: (\textbf{H1}) weighted faceted navigation can be
performed in reasonable time in an incomprehensible solution space associated
with product configuration; and (\textbf{H2}) the feasibility of our framework
depends on the complexity of the given problem, i.e., program.
The implementation and experiments are publicly available %
\cite{2021_5780050, 2021_5767981}.

\paragraph{Environment.}
\verb"fasb" is designed for desktop systems, enabling users to
practicably explore the solution space in an interactive way. Hence,
runtime was limited to 600~seconds and the experiments were run on an
eight core Intel i7-10510U CPU 1.8~GHz with 16~GB of RAM, running
Manjaro Linux 21.1.1 (kernel 5.10.59-1-MANJARO).
Runtime was measured in elapsed time by timers in \verb"fasb" itself.

\paragraph{Design of Experiment.}
Currently, we miss data on real user behavior. Thus, we run three
iterations of random navigation steps in each of the implemented modes, to 
simulate a user and avoid bias regarding the
choice of steps.  For \emph{go}, \emph{sgo-fc}, and \emph{sgo-abs}, we use the
\verb"--random-safe-walk" call, which in the provided mode performs random steps
until the current route is maximal safe,~e.g., in \emph{sgo-fc} and
\emph{sgo-abs} it computes maximal weighted facets and then chooses one of them
to activate randomly. Since, in practice, using \emph{expl-fc} and
\emph{expl-abs}, we do not necessarily aim to arrive at a unique solution, we
use \verb"--random-safe-steps" for \emph{expl-fc} and \emph{expl-abs} and
provide the maximum number~$n$ of steps  among iterations in \emph{go}, which
performs $n$~random steps in the provided mode. We measure the elapsed time for
a mode to filter current facets according to its strategy, then, using the
mentioned calls, we randomly select a facet thereof to activate, until we
reach a unique solution or took $n$~steps.  
For any mode except \emph{go}, we ignore the elapsed time of \verb"--activate",
for \emph{go} we solely measure elapsed time of the \verb"--activate" call,
which in the case of \emph{go} includes runtime of computing facets.
\verb"fasb" computes the initial facets at startup, which are used throughout
further computations, in particular when performing a first step.  Thus, we
add elapsed time, due to startup, to the first result in each mode.

\paragraph{Instances.}
To study (\textbf{H1}), we inspect product configuration~\cite{2020_5777217}
where users may
configure PC components over a large solutions space until a full configuration
is obtained. 
To verify (\textbf{H2}), we select instances from \emph{abstract argumentation}
using the ASPARTIX fixed ASP encodings~\citep{DvorakGRWW20} \verb"stable.lp"
and \verb"preferred-cond-disj.dl". 
There, brave and cautious reasoning in abstract argumentation is of higher
complexity for preferred semantics than for the stable semantics~\cite{BaroniCG2011}. 
For the \emph{stable} argumentation semantics, the problems can be
encoded as normal programs. Whereas for preferred, one needs
disjunctive programs.
As input instance, we used the abstract argumentation framework
\verb"A/3/ferry2.pfile-L3-C1-06.pddl.1.cnf.apx" from the benchmark set of 
(ICCMA'17)~\cite{GagglLMW20}. There, solutions of both semantics coincide with 
exactly 7696 answer sets. 

\paragraph{Observations and Results.}
In the beginning of PC configuration, we choose from 340 facets resulting in on average
 in 15 steps in \emph{go} and 13 steps in \emph{sgo-fc} to reach a uniqe
solution. Taking 16 steps in \emph{expl-fc}, throughout all iterations the
facet-counting pace of the obtained route is 9\%.  The number of solutions for
the respective generated benchmark \verb"pc_config" remains unknown. Running \verb"clingo" for over 9 hours
resulted in
more than~$1.3 \cdot 10^9$~answer sets.  As expected, for more than a billion
solutions, \emph{sgo-abs} and \emph{expl-abs} timed out in the first step.
Inspecting Figure~\ref{subfig:pcc}, we see that \emph{sgo-fc} execution time
drops significantly from Step 1 to 5, 
which originates in the fact that Steps 1 to 5 throughout all iterations on the
average decreased the number of remaining facets by 35\%. Consequently, it
reduces the number of facets to compute weights for and leads to shorter
execution times. In \emph{expl-fc}, on the other hand, throughout all iterations
each step decreases the facet-count by 2. Except for one outlier, this leads to
slowly decreasing, but in general, similar execution
times. %
Figures~\ref{subfig:afst} and~\ref{subfig:afpr} illustrate the
execution times for navigation steps in  the argumentation instances.
As expected, we see no timeouts when navigating through 7696 stable
extensions.
Whereas exploring 7696 preferred extensions, works only in mode \emph{go}. 
Computing cautious consequences was most expensive when considering the
execution time of processes at startup for preferred extensions, which
emphasizes (\textbf{H2}).
From Figure~\ref{subfig:afst}, we see that \emph{go}, \emph{sgo-fc},
and \emph{expl-fc} show a similar trend to Figure~\ref{subfig:pcc}.
While \emph{go} and \emph{expl-fc} remain rather steady in execution
time, \emph{sgo-fc} drops in the first steps.  Moreover, we observe
that the execution time of \emph{expl-abs}, in contrast to
\emph{expl-fc}, decreases noticeably with every step indicating
that 
counting less answer sets in each step becomes easier, whereas counting facets does not. Throughout
all iterations, while \emph{sgo-fc} needs 6 steps, \emph{sgo-abs} only needs 5
steps to reach a unique solution.  The significant drop between Step 1 and 2 in
\emph{sgo-abs} originates in zooming
in by 93\%, pruning 7152 out of 7696 solutions.

\paragraph{Summary.} 
In general (\textbf{H2}) the feasibility of weighted
navigation depends on the complexity of the given problem.
Regarding product configuration, associated with a large and
incomprehensible solution space
(\textbf{H1}), weighted navigation can be performed in reasonable time
using \verb"fasb".

\section{Conclusion and Future Work}
We provide a formal, dynamic, and flexible framework for navigating
through subsets of answer sets in a systematic way.
We introduce absolute and relative weights to quantify the size of the
search space when reasoning under assumptions (facets) as well as
natural navigation operations. %
In a  systematic comparison, we prove  which weights can be employed
under the search space navigation operations.
In addition, we illustrate the computational complexity for computing
the weights.
Our framework is intended as an additional layer on top of
a solver, adding functionality for systematically manipulating the
size of the solution space during (faceted) answer set navigation.
Our implementation, on top of the solver \verb"clingo", demonstrates
feasibility of our framework for an incomprehensible solution space.

For future work, we believe that an interesting question is to research
relative weights which preserve the properties min-inline and max-inline.
Furthermore, we aim to investigate whether supported model counting is
in fact practically feasible using recent developments in
propositional model
counting~\cite{BM20,FichteHecherHamiti20,FichteEtAl21b,FichteHecherRoland21,KorhonenJarvisalo2021}
and ASP~\cite{FichteHecher19a}.

\cleardoublepage
\section{Acknowledgements}
The authors are stated in alphabetic order. This research was partially funded by
the DFG through the Collaborative Research Center, Grant TRR 248 see
\url{https://perspicuous-computing.science} project ID 389792660, the
Bundesministerium für Bildung und Forschung (BMBF), Grant 01IS20056\_NAVAS, a
Google Fellowship at the Simons Institute, and the Austrian Science Fund (FWF),
Grant Y698.
Work has partially been carried out while Johannes Fichte was visiting the
Simons Institute for the Theory of Computing.

\longversion{
  \bibliographystyle{named}

}
\shortversion{
}

\end{document}